\newtheorem{theorem}{Theorem}
\newcommand\Mark[1]{\textsuperscript{#1}}
\begin{document}

% \title{Pixel-variant Local Homography for Fisheye Stereo Rectification Minimizing Resampling Distortion}

% % \author{Dingfu Zhou, Yuchao Dai and Hongdong Li}
% \author{\IEEEauthorblockN{Dingfu Zhou\IEEEauthorrefmark{1},
% Yuchao Dai\IEEEauthorrefmark{1}and Hongdong Li\IEEEauthorrefmark{1,2}}
% \IEEEauthorblockA{Department of Whatever,
% Whichever University\\
% Wherever\\
% Email: \IEEEauthorrefmark{1}author.one@add.on.net,
% \IEEEauthorrefmark{2}author.two@add.on.net,
% \IEEEauthorrefmark{3}author.three@add.on.net,
% \IEEEauthorrefmark{4}author.four@add.on.net}}

\title{Pixel-variant Local Homography for Fisheye Stereo Rectification Minimizing Resampling Distortion\\[.75ex] 
  {\normalfont\large 
    Dingfu Zhou\Mark{1}, Yuchao Dai \Mark{1} and Hongdong Li\Mark{1,2}%
  }\\[-1.5ex]
}

\author{
    \IEEEauthorblockA{%
        \Mark{1} Research School of Engineering, The \\Australian National University (ANU), Australia%
    }
    \and
    \IEEEauthorblockA{%
        \Mark{2}Australia Centre of Excellence for \\Robotic Vision (ACRV), Australia%
    }
}
% for over three affiliations, or if they all won't fit within the width
% of the page, use this alternative format:
% 
%\author{\authorblockN{Michael Shell\authorrefmark{1},
%Homer Simpson\authorrefmark{2},
%James Kirk\authorrefmark{3}, 
%Montgomery Scott\authorrefmark{3} and
%Eldon Tyrell\authorrefmark{4}}
%\authorblockA{\authorrefmark{1}School of Electrical and Computer Engineering\\
%Georgia Institute of Technology,
%Atlanta, Georgia 30332--0250\\ Email: mshell@ece.gatech.edu}
%\authorblockA{\authorrefmark{2}Twentieth Century Fox, Springfield, USA\\
%Email: homer@thesimpsons.com}
%\authorblockA{\authorrefmark{3}Starfleet Academy, San Francisco, California 96678-2391\\
%Telephone: (800) 555--1212, Fax: (888) 555--1212}
%\authorblockA{\authorrefmark{4}Tyrell Inc., 123 Replicant Street, Los Angeles, California 90210--4321}}
\maketitle
\begin{abstract}
Large field-of-view fisheye lens cameras have attracted more and more researchers' attention in the field of robotics. However, there does not exist a convenient off-the-shelf stereo rectification approach which can be applied directly to fisheye stereo rig. One obvious drawback of existing methods is that the resampling distortion (which is defined as the loss of pixels due to under-sampling and the creation of new pixels due to over-sampling during rectification process) is severe if we want to obtain a rectification with epipolar line (not epipolar circle) constraint. To overcome this weakness, we propose a novel pixel-wise local homography technique for stereo rectification. First, we prove that there indeed exist enough degrees of freedom to apply pixel-wise local homography for stereo rectification. Then we present a method to exploit these freedoms and the solution via an optimization framework. Finally, the robustness and effectiveness of the proposed method have been verified on real fisheye lens images. The rectification results show that the proposed approach can effectively reduce the resampling distortion in comparison with existing methods while satisfying the epipolar line constraint. By employing the proposed method, dense stereo matching and 3D reconstruction for fisheye lens camera become as easy as perspective lens cameras.  
\end{abstract}
\IEEEpeerreviewmaketitle
%--------------------------------------
\section{Introduction}
%---------------------------------------
Fisheye lens cameras have been widely used in the fields of computer vision \cite{caruso2015large}, robotics \cite{zhangbenefit} and augmented reality/virtual reality, (e.g., Project Tango \cite{hane2014real}), due to its large field of view. It is natural to use a pair of fisheye lens images, or even motion sequence for Structure from Motion (SfM) or dense 3D reconstruction, where stereo rectification is often a prerequisite. Although stereo-rectification has been thought to be a well solved problem, surprisingly a convenient off-the-shelf solution is not available for fisheye lens images. An opportunistic and commonly used solution is first to correct the fisheye image to perspective image, and then do standard perspective rectification \cite{zhang2015line}. The drawback of this approach (as shown in subfig.(\ref{subfig:rec_per_building})) is large rectification resampling distortion which will hinder the post-processing e.g., sparse features tracking or dense stereo matching.

The large distortion is caused by two aspects: 1) undistorting fisheye image to perspective will generate resampling distortion; 2) rectifying two perspective images will also generate resampling distortion. In order to avoid resampling distortion during the undistortion step, many researchers proposed to rectify the fisheye image on a spherical surface rather than a plane \cite{heller2009stereographic,fujiki2007epipolar,geyer2003conformal}. By doing this, although the resampling distortion has been reduced, the dense stereo matching is difficult because epipolar lines become curves in spherical coordinates. 

Recently, plane-sweeping technique \cite{gallup2007real} has been widely applied to estimate dense depth for perspective cameras. By using plane-sweeping, the stereo rectification step is not necessarily required. Furthermore, H\"{a}ne \emph{et al.} transferred this technique for fisheye images in \cite{hane2014real}. However, before applying plane-sweeping, the tangential and radial distortions of fisheye images should be removed. Therefore, this kind of methods cannot work properly when camera's field-of-view is relative large, because the resampling distortion will be serious in the undistortion step.  

\begin{figure}[t!]
\centering
\begin{subfigure}[t]{0.475\textwidth}
\centering
\includegraphics[width=0.95\textwidth,height = 0.275\textwidth]{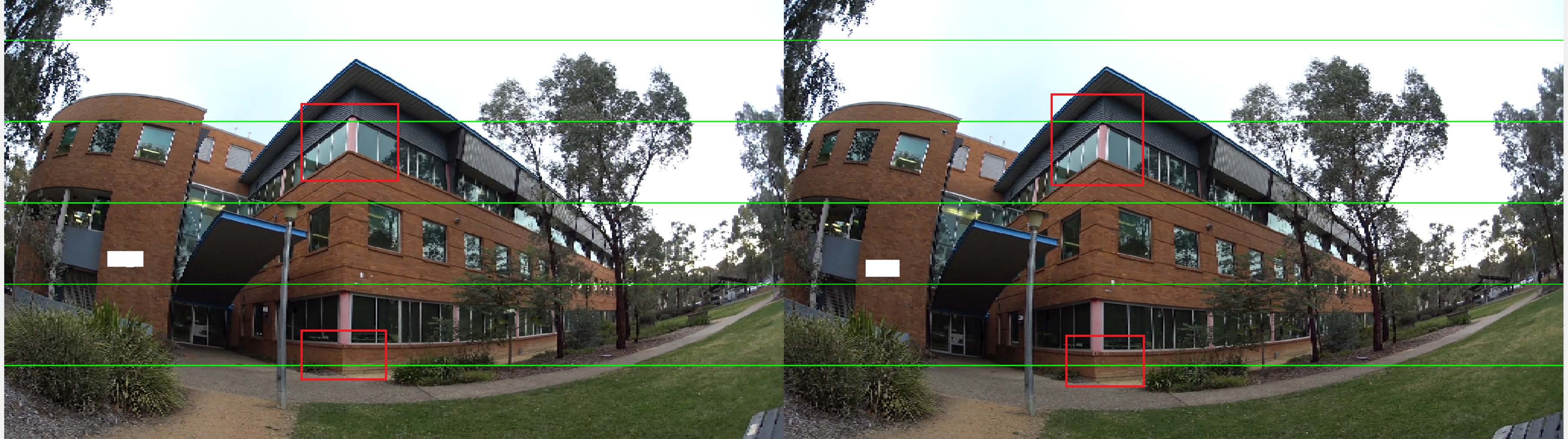}
 \caption{Fisheye image pair.}
\label{subfig:Left_Right_Image_building}
\end{subfigure}
\\
\begin{subfigure}[t]{0.475\textwidth}
\centering
\includegraphics[width=0.95\textwidth,height = 0.275\textwidth]{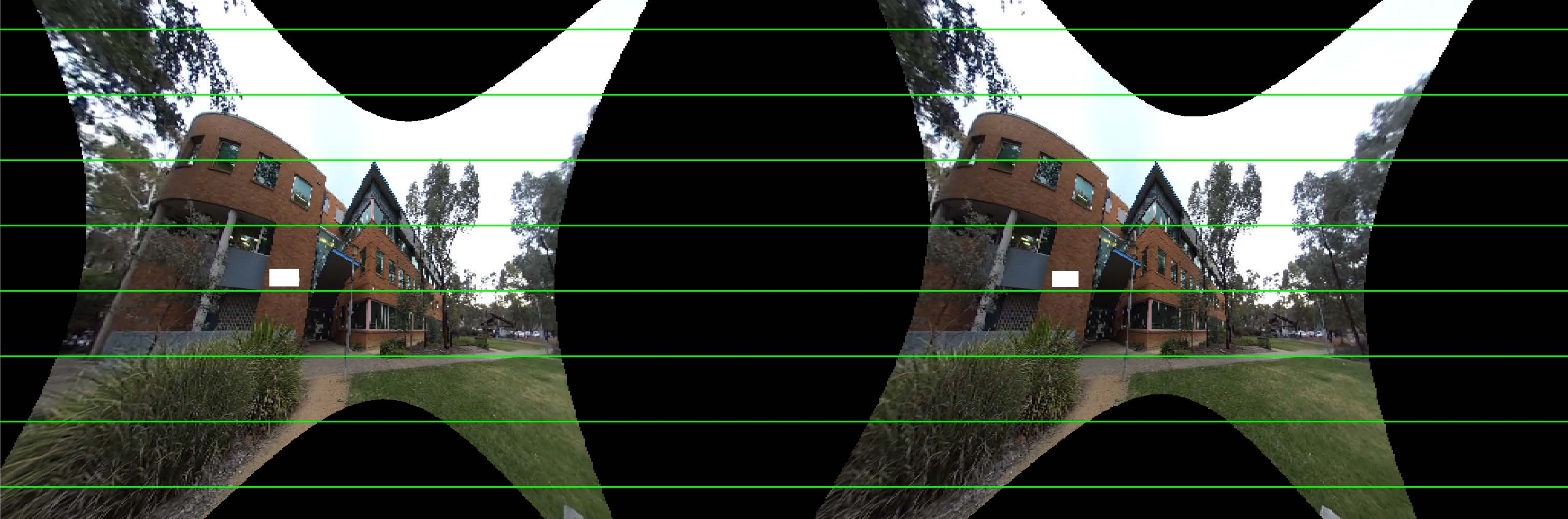}
\caption{Rectification results by a conventional method \cite{bouguet2004camera}.}
\label{subfig:rec_per_building}
\end{subfigure}
\\
\begin{subfigure}[t]{0.475\textwidth}
\centering
\includegraphics[width=0.95\textwidth,height = 0.275\textwidth]{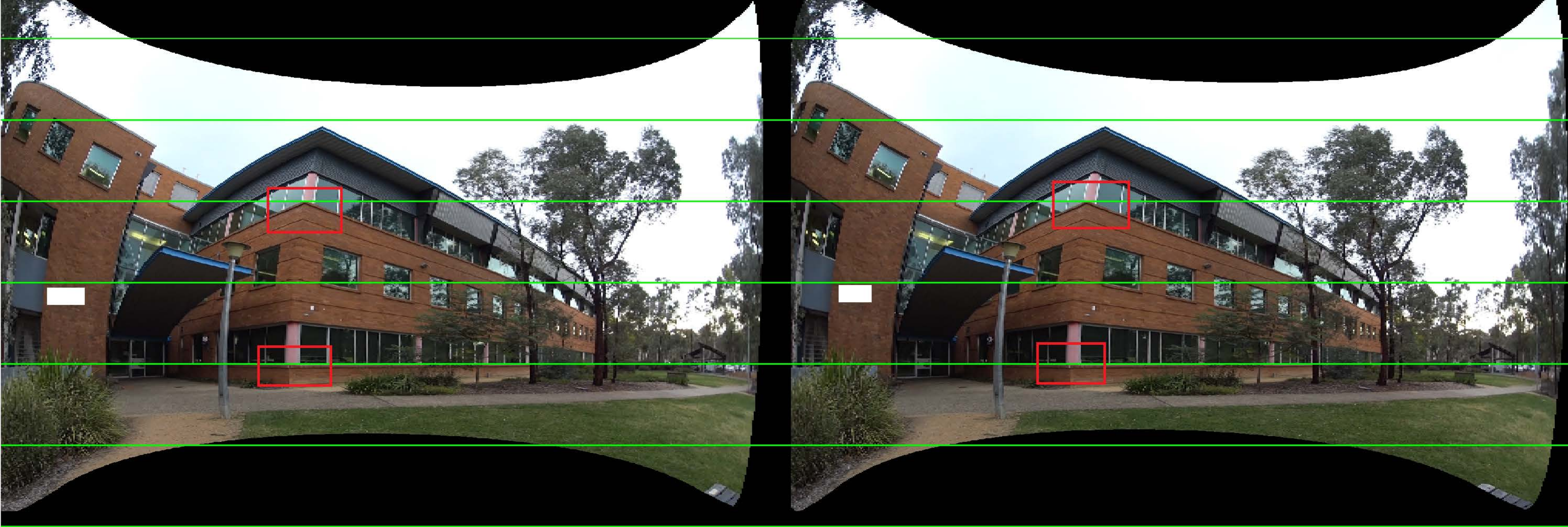}
\caption{Rectification results by the proposed method.}
\label{subfig:pro_fish_bilding}
\end{subfigure}
\caption{Rectification results with different methods. Compared with the conventional method (rectification after removing fisheye distortion), the proposed method effectively minimizes the resamlping distortion while keeping the field of view as original images. Rectangles are used to highlight that epipolar line constraints are satisfied after rectification.} 
\label{fig:rect_fish_building} 
\end{figure}
\vspace{-0.1cm}
To overcome these disadvantages, we propose to find a direct transformation to map the original fisheye image to a rectified one. By doing this, the resampling distortion caused in undistortion step can be avoided. Furthermore, we do not enforce the rectified images follow ideal perspective imaging model. Instead, they can remain their ``fisheye property'' as long as corresponding epipolar lines are parallel and properly arranged. To obtain these properties, we propose a pixel-wise local homography for rectifying fisheye images. Generally speaking, we use different rectifying transformations at different pixel location.

The first contribution of this paper is that: we have proved that there exist enough degrees of freedom to choose the rectifying transformation matrices for stereo rectification. In contrast to applying a single global homography matrix pair for the entire image \cite{gluckman2001rectifying,hartley1999theory,fusiello2008quasi,monasse2010three}, we present a pixel-wise local homography for stereo rectification. Typically, different transformation homographies are applied at different image pixels. Consequently, the rectification resampling distortion has been dramatically reduced as multiple rectifying transformations are considered. 

The second contribution of this paper is to provide a tailored stereo rectification algorithm for fisheye lens images by exploiting the extra degrees of freedom in choosing the transformation matrices. After rectification, the epipolar lines are aligned with the image scan rows, and more importantly the resampling distortion has been minimized. The rectified stereo image pair can be used directly to compute dense stereo matching and 3D reconstruction for further applications in the field of computer vision and robotics. Subfig.(\ref{subfig:pro_fish_bilding}) displays an example of rectification result by using the proposed method. The local information of the original image has been mostly preserved in the rectified images while the epipolar line constraints are also satisfied.

%=======================================
\subsection{Related works}
%=======================================
Many approaches have been proposed to reduce resampling distortion for perspective images \cite{1603.09462,gluckman2001rectifying,fusiello2008quasi,pollefeys1999simple,palander2008epipolar}. Most of them try to find an optimal global transformation matrix (e.g., homography matrix \cite{gluckman2001rectifying,hartley1999theory,loop1999computing,monasse2010three}) to achieve minimum resampling distortion. 

Besides perspective images, many approaches also have been proposed for fisheye lens cameras. Heller \emph{et al.} proposed a general technique for rectifying omni-directional stereo image by mapping the epipolar curves (the epipolar line in perspective image becomes curve in fisheye image) onto circles to reduce the resampling distortion \cite{heller2009stereographic}. In addition, stereographic projection is simply applied for stereo rectification. However, their technique can only obtain epipolar curves rather than epipolar lines. Zafer \emph{et al.} proposed to rectify the fisheye image on spherical coordinates to reduce rectification distortion \cite{arican2007dense}. Although the resampling distortion has been reduced in their methods, traditional dense matching methods cannot be applied directly on rectified images because the correspondences are on circles rather than scan-lines. For parabolic catadioptric cameras, Geyer \emph{et al.} proposed a so called conformal rectification approach by using their conformal properties  \cite{geyer2003conformal}. Unfortunately, it only works for parabolic catadioptric cameras. Abraham \emph{et al.} \cite{abraham2005fish} proposed to keep the ``fisheye property'' in the rectified image to reduce the resampling distortion. However, they just simply applied several existed camera models for the image rectification and didn't try to find an optimal rectification model with minimum resampling distortion. 
 
To remind the readers, the remaining parts of this paper are organized as follows: we first introduce the pixel-wise local homography in Section \ref{Sec:Problem_Statement} and then formulate the stereo rectification as an energy minimization problem. The implementation of stereo rectification and parameters optimization are given in Section \ref{sec:SR_solving}. Main rectification steps and 3D reconstruction are described in Section \ref{sec:3D_reconstruction} and the experimental results for rectification and 3D reconstruction are shown in Section \ref{sec:ExperimentalResults}. Finally, our paper ends with a short conclusion and some future works.  

%-----------------------------------------
\section {Pixel-wise Local Homography \label{Sec:Problem_Statement}}
%-----------------------------------------
Traditionally, stereo rectification is solved as finding a single {\em global homography transformation} which is applied {\em globally and uniformly} for the whole input image. In other words, the task of stereo rectification is represented as finding a pair of rectifying homography that best aligns the epipolar lines in each of the stereo image pairs (e.g.,  \cite{1603.09462,gluckman2001rectifying,fusiello2008quasi,pollefeys1999simple,palander2008epipolar}). Let's denote $(\mathbf{H}_1, \mathbf{H}_2)$ as the desired rectifying transformation pair, then the epipolar line constraint can be expressed as
\begin{equation}
(\mathbf{H}_{2}\mathbf{x}_{2})^{T}[\mathbf{e}]_\times \mathbf{H}_{1}\mathbf{x}_{1} = 0,
\label{Eq:homographyrectificaiton}
\end{equation}
where $\mathbf{x}_{1} = (u_{1}, v_{1}, 1)^T$ and $\mathbf{x}_{2} = (u_{2}, v_{2},1)^T$ are the feature correspondence between image $\mathbf{I}_1$ and $\mathbf{I}_2$. The fundamental matrix corresponds to the rectified image pair has a special form (up to a scale factor) as $\mathbf{F} = [\mathbf{e}]_\times$, where $\mathbf{e} = (1,0,0)^T$.  Based on epipolar geometry, rectifying transformation pair $(\mathbf{H}_1, \mathbf{H}_2)$ can be obtained from fundamental matrix (uncalibrated case) or essential matrix (calibrated case). Various approaches \cite{gluckman2001rectifying,hartley1999theory,loop1999computing,monasse2010three,1603.09462} have been proposed to find the optimal pair $(\mathbf{H}_1,\mathbf{H}_2)$ for minimizing the rectification resampling distortion.

\subsection{General homography transformation}
On contrast to the traditional approaches of using a single pixel-wise-uniform global homography, we propose to use pixel-variant local homographies here. In the derivation below we will first prove its possibility and then show its benefits of providing extra freedoms for choosing rectifying transformations, such as reducing the rectifying resampling distortion, etc. 

For convenience, the image pixel $\mathbf{x} = (u,v,1)^T$ is converted to its corresponding 3D bearing vector $\mathbf{b}$ (an unit 3D ray vector from the camera center to 3D point) using camera's intrinsic parameters. We denote $\mathbf{b} = \mathbf{\Phi} (\mathbf{x})$ as the mapping from 3D bearing vector to 2D image point and $\mathbf{x} = \mathbf{\Phi}^{-1}(\mathbf{b})$ as its inverse mapping. Both $\mathbf{\Phi}(.)$ and $\mathbf{\Phi}^{-1}(.)$ can be obtained by camera calibration \cite{scaramuzza2006toolbox}.

The epipolar constraint expressed with bearing vectors can be expressed as
\begin{equation}
\label{Eq:epipolar_constriant_bearingvector}
(\mathbf{\Phi}_2(\mathbf{x}_{2}))^{T} \mathbf{E} \mathbf{\Phi}_1(\mathbf{x}_{1}) = 0,
\end{equation}
where $\mathbf{\Phi}_1(\mathbf{x}_{1}$) and $\mathbf{\Phi}_2(\mathbf{x}_{2})$ are corresponding bearing vectors of correspondence $\mathbf{x}_{1}$ and $\mathbf{x}_{2}$. 

For convenience, the double rotation matrices parametric strategy \cite{yang2014optimal} is employed here to represent the essential matrix as $\mathbf{E} = \mathbf{R}_2 [\mathbf{e}]_{\times} \mathbf{R}_1^T$. More specifically, we set the first camera’s center as the origin and the second camera’s center as $[1, 0, 0]$ on the X-axis. Here, $\mathbf{R}_{1}$ and $\mathbf{R}_{2}$ are used to denote the absolute orientation of the first and second cameras relative to the world frame. Obviously, using two absolute rotations $(\mathbf{R}_{1}, \mathbf{R}_{2})$ to represent $\mathbf{E}$ is an over-parametrization, because $\mathbf{E}$ has only 5 degree of freedoms (dofs). However, under the above camera setup, any rotation about $X$-axis (i.e. the axis joining the two camera centers) applied to both cameras will leave $\mathbf{E}$ invariant. Due to this property, the essential matrix with our double rotation matrices parametric strategy also has 5 dofs. Substituting the essential matrix $\mathbf{E}$ in Eq.\eqref{Eq:epipolar_constriant_bearingvector}, we have
\begin{equation}
\label{Eq:epipolar_constriant_2R}
(\mathbf{R}_2^T \mathbf{\Phi}_2(\mathbf{x}_{2}))^T [\mathbf{e}]_{\times} (\mathbf{R}_1^T \mathbf{\Phi}_1(\mathbf{x}_{1})) = 0.
\end{equation}

However, the representation of Eq.\eqref{Eq:epipolar_constriant_2R} is not unique and there exist extra freedoms for the epipolar constraint, which to some extent has been neglected in the existing stereo rectification methods. Here, we prove that there exist a group of matrix pairs which satisfy $(\mathbf{A}')^{T} [\mathbf{e}]_{\times} \mathbf{A} = [\mathbf{e}]_{\times}$ and will not change the epipolar constraint and all these matrix pairs ($\mathbf{A}, \mathbf{A}'$) can be obtained from Theorem \ref{theo:theorem_1}.
%-------------------------------------------------
\begin{theorem}
\label{theo:theorem_1}
There exists infinite number of transformation pairs $(\mathbf{A}, \mathbf{A}')$ satisfying $(\mathbf{A}')^{T} [\mathbf{e}]_{\times} \mathbf{A} = [\mathbf{e}]_{\times}$, which are given by\\
$\mathbf{A} = \begin{pmatrix}
a_{11}                & (\mathbf{A}_1)_{1\times2}  \\
\mathbf{0} & (\mathbf{A}_2)_{2\times2}  \\
\end{pmatrix}$ \text{and} 
$\mathbf{A}' = \begin{pmatrix}
a'_{11}                & (\mathbf{A}'_1)_{1\times2} \\
\mathbf{0}  & \lambda (\mathbf{A}_2)_{2\times2}\\
\end{pmatrix},$ 
where $\lambda \neq 0$ and $\mathbf{A}$, $\mathbf{A}'$ are any non-singular matrices.
\end{theorem}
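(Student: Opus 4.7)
The plan is to exploit the sparsity of $[\mathbf{e}]_\times$ in order to reduce the $3 \times 3$ matrix equation $(\mathbf{A}')^T [\mathbf{e}]_\times \mathbf{A} = [\mathbf{e}]_\times$ to a $2 \times 2$ problem. First I would write $[\mathbf{e}]_\times = \begin{pmatrix} 0 & \mathbf{0}^T \\ \mathbf{0} & \mathbf{J} \end{pmatrix}$ with $\mathbf{J} = \begin{pmatrix} 0 & -1 \\ 1 & 0 \end{pmatrix}$, and partition $\mathbf{A}$ and $\mathbf{A}'$ conformally as $\mathbf{A} = \begin{pmatrix} a_{11} & \mathbf{r}^T \\ \mathbf{c} & \mathbf{B} \end{pmatrix}$ and $\mathbf{A}' = \begin{pmatrix} a'_{11} & (\mathbf{r}')^T \\ \mathbf{c}' & \mathbf{B}' \end{pmatrix}$ with $\mathbf{B}, \mathbf{B}'$ of size $2 \times 2$. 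Because the first row and column of $[\mathbf{e}]_\times$ are zero, the entries $a_{11}$, $\mathbf{r}$, $a'_{11}$, $\mathbf{r}'$ never appear in the product and are automatically free parameters; this by itself produces infinitely many candidate pairs and accounts for the unconstrained top-row entries in the theorem statement.

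Carrying out the block multiplication, the constraint splits into four block equations: $(\mathbf{c}')^T \mathbf{J} \mathbf{c} = 0$, $(\mathbf{c}')^T \mathbf{J} \mathbf{B} = \mathbf{0}^T$, $(\mathbf{B}')^T \mathbf{J} \mathbf{c} = \mathbf{0}$, and $(\mathbf{B}')^T \mathbf{J} \mathbf{B} = \mathbf{J}$. The last equation forces both $\mathbf{B}$ and $\mathbf{B}'$ to be nonsingular (since $\mathbf{J}$ is), and then the two cross-block equations immediately yield $\mathbf{c} = \mathbf{c}' = \mathbf{0}$. This recovers exactly the zero $2 \times 1$ lower-left blocks appearing in the theorem.

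It remains to analyse $(\mathbf{B}')^T \mathbf{J} \mathbf{B} = \mathbf{J}$. I would invoke the classical $2 \times 2$ symplectic identity $\mathbf{M}^T \mathbf{J} \mathbf{M} = \det(\mathbf{M}) \mathbf{J}$, which is a one-line direct computation. Substituting the ansatz $\mathbf{B}' = \lambda \mathbf{B}$ gives $\lambda \det(\mathbf{B}) \mathbf{J} = \mathbf{J}$, so $\lambda = 1/\det(\mathbf{B})$ indeed solves the equation. For uniqueness, solving the constraint formally as $(\mathbf{B}')^T = \mathbf{J} \mathbf{B}^{-1} \mathbf{J}^{-1}$ and expanding $\mathbf{B}^{-1}$ via the $2 \times 2$ adjugate formula shows that this expression is necessarily a scalar multiple of $\mathbf{B}$. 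Hence $\mathbf{B}' = \lambda \mathbf{B}$ with $\lambda \neq 0$, which is precisely the form claimed for the bottom-right blocks, and combining with the top-row freedom completes the proof.

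The main obstacle I anticipate is convincing the reader that $(\mathbf{B}')^T \mathbf{J} \mathbf{B} = \mathbf{J}$ forces a strict scalar proportionality between $\mathbf{B}$ and $\mathbf{B}'$; without the symplectic identity this is not at all obvious, since a priori $\mathbf{B}'$ could depend on $\mathbf{B}$ in some more intricate manner. A secondary point of care is the status of $\lambda$: the theorem's blanket $\lambda \neq 0$ is consistent with reading the epipolar constraint projectively (both sides defined only up to scale), whereas the literal equality in the theorem statement actually pins $\lambda$ down to $1/\det(\mathbf{B})$. I would flag this distinction explicitly in the final writeup so that the one-parameter family is not conflated with the scale-freedom of the epipolar constraint itself.
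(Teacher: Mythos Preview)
Your proposal is correct and, at the structural level, follows the same route as the paper: expand the matrix equation $(\mathbf{A}')^{T}[\mathbf{e}]_\times\mathbf{A}=[\mathbf{e}]_\times$, observe that the first row and column of $[\mathbf{e}]_\times$ kill the top-row entries of $\mathbf{A},\mathbf{A}'$, deduce that the lower-left $2\times1$ blocks must vanish because the $2\times2$ bottom-right blocks are invertible, and finally show that those two $2\times2$ blocks are scalar multiples of one another.

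The execution differs, however. The paper carries out the computation entry-by-entry, writing out nine scalar equations (its Eqs.~(5)--(7)) and arguing from the four equations in (5) directly to proportionality of the $2\times2$ blocks. Your block partition $[\mathbf{e}]_\times=\mathrm{diag}(0,\mathbf{J})$ together with the $2\times2$ symplectic identity $\mathbf{M}^{T}\mathbf{J}\mathbf{M}=\det(\mathbf{M})\,\mathbf{J}$ replaces that case analysis by a one-line argument and makes the reason for proportionality transparent: $(\mathbf{B}')^{T}=\mathbf{J}\mathbf{B}^{-1}\mathbf{J}^{-1}=\tfrac{1}{\det\mathbf{B}}\mathbf{B}^{T}$. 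This is cleaner and also yields the sharper conclusion $\lambda=1/\det(\mathbf{B})$, which the paper does not isolate. Your closing remark that the theorem's ``any $\lambda\neq0$'' is really the projective scale freedom of the epipolar constraint, not freedom in the literal equality, is a valid clarification that would improve the statement; keep it in the writeup.
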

\begin{proof}
Assuming any two transformation matrices as $\mathbf{A} = (a_{ij})$ and $\mathbf{A}' = (a'_{ij})$ ($i,j = 1,2,3$) and they satisfy 
\begin{equation}
(\mathbf{A}')^{T} [\mathbf{e}]_{\times} \mathbf{A} = [\mathbf{e}]_{\times}.
\label{eq:EpipolarConstraint}
\end{equation}
Substituting $\mathbf{A}$ and $\mathbf{A}'$ into Eq.(\ref{eq:EpipolarConstraint}), we have 
\begin{equation} 
\label{eq::equation_1}
\left\{
\begin{aligned}
a_{22}a'_{32} - a_{32}a'_{22} = 0, & ~a_{23}a'_{33} - a_{33}a'_{23} = 0 \\
a_{22}a'_{33} - a_{32}a'_{23} = 1, & ~a_{23}a'_{32} - a_{33}a'_{22} = -1 \\
\end{aligned}
\right. ,
\end{equation} 
\begin{equation}
\label{eq::equation_2}
\begin{pmatrix}
a_{22} & -a_{32} \\
a_{23} & -a_{33} \\
\end{pmatrix}
\begin{pmatrix}
a'_{31} \\
a'_{21}
\end{pmatrix}
 = \mathbf{0}
\end{equation}
and 
\begin{equation}
\label{eq::equation_3}
\begin{pmatrix}
a'_{22} & -a'_{32} \\
a'_{23} & -a'_{33} \\
\end{pmatrix}
\begin{pmatrix}
a_{31} \\
a_{21}
\end{pmatrix} = \mathbf{0}.
\end{equation}
Based on Eq.(\ref{eq::equation_1}), we can obtain the conclusion that
\begin{equation}
\begin{pmatrix}
a_{22} &  a_{23} \\
a_{32} &  a_{33} \\
\end{pmatrix}
 = \lambda 
 \begin{pmatrix}
a'_{22} &  a'_{23} \\
a'_{32} &  a'_{33} \\
\end{pmatrix}, ~~ \text{where}
\begin{gathered}
\lambda \neq 0.
\end{gathered}      
\end{equation}
According to Eq.(\ref{eq::equation_2}), we have
\begin{equation*}
\begin{pmatrix}
a_{22} & -a_{32} \\
a_{23} & -a_{33} \\
\end{pmatrix}
\begin{pmatrix}
a'_{31} \\
a'_{21}
\end{pmatrix}
= \mathbf{0},~\text{however,}~
\begin{gathered}
\textrm{det}
\begin{pmatrix}
a_{22} ~ -a_{23} \\
a_{32} ~ -a_{33} \\
\end{pmatrix}
\neq 0,
\end{gathered}      
\end{equation*}
therefore, we have $[a'_{21}~a'_{31}]^T = \mathbf{0}$. Similarly, we get $[a_{21}~a_{31}]^T = \mathbf{0}$. Finally, we obtain the conclusion described in Theorem \ref{theo:theorem_1}.
\end{proof}

Based on the above theorem, Eq.\eqref{Eq:epipolar_constriant_2R} can be expressed more generally as 
\begin{equation}
\label{Eq:realform_epipolar_constraint}
(\mathbf{A}'(\mathbf{x}_2) \mathbf{R}_2^T \mathbf{\Phi}_2(\mathbf{x}_{2}))^T [\mathbf{e}]_{\times} (\mathbf{A}(\mathbf{x}_1) \mathbf{R}_1^T \mathbf{\Phi}_1(\mathbf{x}_{1})) = 0.
\end{equation}
For each correspondence $(\mathbf{x}_1, \mathbf{x}_2)$, a unique transformation pair $(\mathbf{A}(\mathbf{x}_1), \mathbf{A}'(\mathbf{x}_2))$ can be chosen independently as long as they satisfy the constraint described in Theorem \ref{theo:theorem_1}. 

If we define $\mathbf{H}^{\mathbf{x}_{1}}_{1} = \mathbf{A}(\mathbf{x}_1)\mathbf{R}^{T}_{1}\mathbf{\Phi}_1(\mathbf{x}_{1}) ~~ \text{and} ~~ \mathbf{H}^{\mathbf{x}_{2}}_{2} = \mathbf{A}'(\mathbf{x}_2)\mathbf{R}^{T}_{2}\mathbf{\Phi}_2(\mathbf{x}_{2})$ as the homographies at $\mathbf{x}_1$ and $\mathbf{x}_2$ respectively, then Eq.\eqref{Eq:realform_epipolar_constraint} can be rewritten simply as
\begin{equation}
(\mathbf{H}^{\mathbf{x}_{2}}_{2})^{T}[\mathbf{e}]_\times \mathbf{H}^{\mathbf{x}_{1}}_{1} = 0.
\label{Eq:Spatially_variant_homography}
\end{equation}
Eq.\eqref{Eq:Spatially_variant_homography} can be viewed as a generalized form of Eq.\eqref{Eq:epipolar_constriant_bearingvector}. The only difference is that the transformation homography used here varies with the image location.
%-------------------------------------------------------------------
\subsection{Stereo rectification formulation \label{subsec:Formulation_SR}}
%--------------------------------------------------------------
In this subsection, we proposed to apply the pixel-wise local homography to reduce the resampling distortion during the stereo rectification procedure. Without loss of generality, we denote ${\cal H}_{1,2}$ as a group of transformation matrices (e.g., $\mathbf{H}^{\mathbf{x}_{1,2}}_{1,2}$) to map the original image $\mathbf{I}_{1,2}$ to rectified one $\mathbf{I}'_{1,2}$ and the stereo rectification problem aims at finding these two groups of transformation matrices ${\cal H}_1$ and ${\cal H}_2$ which satisfy the following requirements:
\begin{enumerate}[1)]
\item \textbf{Epipolar line alignment:} After rectification, the epipolar lines should be aligned with the image scan lines in the rectified images. Mathematically, the epipolar line constraint can be satisfied if
\begin{equation}
({\cal H}_{2}(\mathbf{x}_{2}))^{T}[\mathbf{e}]_\times {\cal H}_{1}(\mathbf{x}_{1}) = 0,
\label{Eq:Spatially_variant_H_general}
\end{equation}
where $\mathbf{x}_{1}$ and $\mathbf{x}_{2}$ are correspondences in $\mathbf{I}_1$ and $\mathbf{I}_2$. ${\cal H}_{1}(\mathbf{x}_{1}) = {\mathbf{H}}^{\mathbf{x}_{1}}_{1}$ and ${\cal H}_{2}(\mathbf{x}_{2}) ={\mathbf{H}}^{\mathbf{x}_{2}}_{2}$ represent pixel-wise  homography matrices at $\mathbf{x}_{1}$ and $\mathbf{x}_{2}$ respectively. 

\item \textbf{Scan line order preserving:} Unlike the global homography based rectification, which can both satisfy the epipolar constraint and preserve the scene structure. The pixel-wise local homography can only guarantee the epipolar constraint due to its extra freedom. Therefore, scan line order preserving constraint is proposed here to preserve the scene structure before and after rectification. To some extend, this requirement can be satisfied if the whole mapping process is unique and invertible.  

\item \textbf{Resampling distortion minimization:} During the rectification, the image local information should also be preserved. Assuming the resampling distortion caused by ${\cal{H}}_{1}$ and ${\cal{H}}_{2}$ is ${\cal L}({\cal{H}}_{1},{\cal{H}}_{2})$, then a good transformation should minimize ${\cal L}_{}$ over the whole image.    
\end{enumerate}

By putting all the constraints together, we formulate the stereo rectification as the following minimization
\begin{equation}
\begin{aligned}
 & \underset{\mathcal{H}_{1}, \mathcal{H}_{2}}{\text{minimize}}
 & &  {\cal L}(\mathcal{H}_{1}, \mathcal{H}_{2}) \\
 & \text{subject to} 
 & &  ({\cal H}_{2}(\mathbf{x}_{2}))^{T}[\mathbf{e}]_\times{\cal H}_{1}(\mathbf{x}_{1}) = 0,\\ 
 & 
 & &  \mathcal{H}_1 ~\text{and}~ \mathcal{H}_2 ~\text{are invertable}.\\ 
 \end{aligned}
 \label{Eq:Minimization_1}
 \end{equation}
 
 \vspace{0.25cm}
%----------------------------------------------------
\section{Implementation\label{sec:SR_solving}}
%----------------------------------------------------
According to Eq.\eqref{Eq:realform_epipolar_constraint}, we find that $\cal H$ can be expressed as
\begin{equation} 
\cal {H} = \mathbf{\Psi} \circ \mathcal{R} \circ \mathbf{\Phi},
\end{equation} 
where $\mathbf{\Psi}$ represents the local transformation (e.g., $\mathbf{A}$, $\mathbf{A}'$) and $\mathcal{R}$ denotes the global rotation (e.g., $\mathbf{R}_1$, $\mathbf{R}_2$). The camera intrinsic parameters $\mathbf{\Phi}$ can be ignored here because it is constant for all image pixels.  

%%%%%%%%%%%%%%%%%%%%%%%%%%%%%%%%%%%%%%%%%%%%%%%%%%%%%%
\subsection{Epipolar line alignment}
%%%%%%%%%%%%%%%%%%%%%%%%%%%%%%%%%%%%%%%%%%%%%%%%%%
Generally speaking, global transformation $\mathcal{R}$ aims at rotating the left and right camera coordinates to force all the bearing vectors satisfying the epipolar constraint with a special form of $\mathbf{F} = [\mathbf{e}]_{\times}$ in the new camera coordinates. The rigid rotation $\mathcal{R}$ can be realized as two rotations $\mathbf{R}_1$ and $\mathbf{R}_2$, which can be computed from the camera relative pose \cite{bouguet2004camera}. The relative camera pose between two cameras can be estimated via off-line stereo calibration or on-line estimation from the two images. After rotating with $\mathbf{R}_1$ and $\mathbf{R}_2$, all the bearing vectors satisfy the epipolar line constraint below
\begin{equation}
(\mathbf{R}^{T}_2 \mathbf{\Phi}_{2}(\mathbf{x}_{2}))^{T}[\mathbf{e}]_\times(\mathbf{R}^{T}_1\mathbf{\Phi}_{1}(\mathbf{x}_{1})) = 0.
\end{equation}
In addition, there exists an extra degree of freedom for rotation pair $(\mathbf{R}_1, \mathbf{R}_2)$, which is that any rotation around the baseline will not change the epipolar constraint.
%=========================================
\subsection{Scan line order preserving}
%=========================================
Local transformation $\mathbf{\Psi}$ (e.g., $\mathbf{A}$ and $\mathbf{A}'$), which can be considered as two projection matrices, aims at projecting bearing vectors into the new camera coordinates as $(\widehat{u}, \widehat{v}, 1)^T = \mathbf{A}(b_{x},b_{y},b_{z})^T$. Furthermore, we find that $\widehat{u}$ and $\widehat{v}$ can be determined independently in Theorem \ref{theo:theorem_2}. 
\begin{theorem}
\label{theo:theorem_2}
For any bearing vector $\mathbf{b} = (b_{x},b_{y},b_{z})$, its corresponding coordinates $(\widehat{u}, \widehat{v})$ in rectified image can be determined independently. 
\end{theorem}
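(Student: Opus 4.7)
The plan is to unpack the explicit form of $\mathbf{A}$ furnished by Theorem~\ref{theo:theorem_1} and read off, by direct inspection, that the two image coordinates $\widehat{u}$ and $\widehat{v}$ are produced by disjoint groups of entries of $\mathbf{A}$. Writing the projection $(\widehat{u},\widehat{v},1)^T \sim \mathbf{A}(b_x,b_y,b_z)^T$ out componentwise using the block structure
$\mathbf{A}=\begin{pmatrix}a_{11} & a_{12} & a_{13}\\ 0 & a_{22} & a_{23}\\ 0 & a_{32} & a_{33}\end{pmatrix}$
and dividing by the third coordinate yields
\begin{equation*}
\widehat{u}=\frac{a_{11}b_x+a_{12}b_y+a_{13}b_z}{a_{32}b_y+a_{33}b_z},\qquad
\widehat{v}=\frac{a_{22}b_y+a_{23}b_z}{a_{32}b_y+a_{33}b_z}.
\end{equation*}
The key observation to highlight is that $\widehat{v}$ is a function solely of $(b_y,b_z)$ and of the lower $2\times 2$ block $\mathbf{A}_2=\begin{pmatrix}a_{22}&a_{23}\\a_{32}&a_{33}\end{pmatrix}$, whereas the first row $(a_{11},a_{12},a_{13})$ appears only in the numerator of $\widehat{u}$.

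Next I would argue independence in two steps. First, fix any desired $\widehat{v}$-mapping by choosing the block $\mathbf{A}_2$; this determines the common denominator $a_{32}b_y+a_{33}b_z$ as well. Second, with this denominator fixed, the numerator of $\widehat{u}$ can still be set to any target value by choosing the triple $(a_{11},a_{12},a_{13})$ (for instance, with $a_{12}=a_{13}=0$, a single scalar $a_{11}$ already scales $\widehat{u}$ arbitrarily through $b_x$). Since these entries do not reappear in the formula for $\widehat{v}$, the choice made for $\widehat{u}$ does not disturb the earlier choice of $\widehat{v}$. The same argument applies verbatim to the second camera via $\mathbf{A}'$: the $\lambda$-coupling of Theorem~\ref{theo:theorem_1} only ties the lower $2\times 2$ blocks together, while the first rows $(a'_{11},a'_{12},a'_{13})$ remain completely free.

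The one subtlety to address is the non-singularity side condition $\det(\mathbf{A})\neq 0$. Because the block structure is triangular, $\det(\mathbf{A})=a_{11}\det(\mathbf{A}_2)$, so as long as the $\widehat{v}$-step is carried out with $\det(\mathbf{A}_2)\neq 0$ and the $\widehat{u}$-step chooses $a_{11}\neq 0$, the resulting $\mathbf{A}$ remains invertible, which is the only constraint linking the two steps. I expect the routine calculation above to be the entire content; the only mild obstacle is phrasing "independently" precisely — namely, that the parameter space of admissible $\mathbf{A}$ factors as a product in which one factor controls $\widehat{v}$ and a disjoint factor controls $\widehat{u}$ — rather than proving anything substantively new beyond Theorem~\ref{theo:theorem_1}.
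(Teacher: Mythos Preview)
Your proposal is correct and follows essentially the same route as the paper: write $\widehat{\mathbf{x}}=\mathbf{A}\mathbf{b}$ using the block structure of $\mathbf{A}$ from Theorem~\ref{theo:theorem_1}, obtain the two displayed fractions, and note that the first row $(a_{11},a_{12},a_{13})$ enters only the numerator of $\widehat{u}$ and is otherwise unconstrained. You are simply more explicit than the paper (which stops at ``$a_{1j}$ are arbitrary'') in separating the $\mathbf{A}_2$-step from the first-row step and in checking the non-singularity side condition.
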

\begin{proof}
Assuming any pair of matrices $\mathbf{A}$ and $\mathbf{A}'$ as defined in Theorem \ref{theo:theorem_1}, the coordinates $\widehat{\mathbf{x}}= (\widehat{u},\widehat{v},1)$ in rectified image can be computed from $\widehat{\mathbf{x}} = \mathbf{A} \mathbf{b}$ as 
 \begin{equation} 
 \label{Eq:projection_bearingvectors}
 \left\{
 \begin{aligned}
 \widehat{u} & =  \frac{a_{11}b_{x}+a_{12}b_{y}+a_{13}b_{z}}{a_{32}b_{y}+a_{33}b_{z}} \\
 \widehat{v} & = \frac{a_{22}b_{y}+a_{23}b_{z}}{a_{32}b_{y}+a_{33}b_{z}}
 \end{aligned}
 \right. .
 \end{equation} 
 \normalsize
Although $\widehat{u}$ and $\widehat{v}$ share some variables in Eq.(\eqref{Eq:projection_bearingvectors}), they can also be considered as independent because $a_{1j}, j=1,2,3$ are arbitrary. 
\end{proof}

Without loss of generality, we use $\mathbf{\Psi}_{u}$ and $\mathbf{\Psi}_{v}$ to represent the projection of bearing vector into $u$ and $v$ coordinates respectively. Mathematically, we have
$$\widehat{u} = \mathbf{\Psi}_{u}(\mathbf{b})~~\text{and} ~~\widehat{v} = \mathbf{\Psi}_{v}(\mathbf{b}).$$ 

Then the {\em local transformation} $\mathbf{\Psi}$ is comprised of two parts, namely $\mathbf{\Psi}_{u}$ and $\mathbf{\Psi}_{v}$. Frankly speaking, the choice of $\mathbf{\Psi}$ is infinite many. Here, we just introduce a reasonable and easy implemented solution for $\mathbf{\Psi}$ and leave the task of seeking a more general expression of $\mathbf{\Psi}$ for future work. 

After rotating with $\mathbf{R}_1$ and $\mathbf{R}_2$, two cameras reach a desired canonical configuration as in fig.(\ref{fig:ProjectionToImage}): all the axis are coincident with each other except a translation (baseline) in $X$-axis. We propose to implement $\mathbf{\Psi}_{u}$ and $\mathbf{\Psi}_{v}$ with two angles ($\gamma, \beta$) which are defined as $\beta = \arctan\frac{b_{y}}{b_{z}}$ and $\gamma = \arctan\frac{b_{x}}{\sqrt{(b_{y})^2+(b_{z})^2}}$, where $(b_{x},b_{y},b_{z})$ is any bearing vector in the new camera coordinates. We implement $\mathbf{\Psi}_{v}$ with $\beta$ can keep the epipolar line constraint unchanged after projection  because all points on the same epipolar line share same $\beta$. In addition, implementing $\mathbf{\Psi}_{u}$ with $\gamma$ can keep the order of the points on the same epipolar line. 
\begin{figure}[!ht]
\centering
\includegraphics[width=0.25\textwidth]{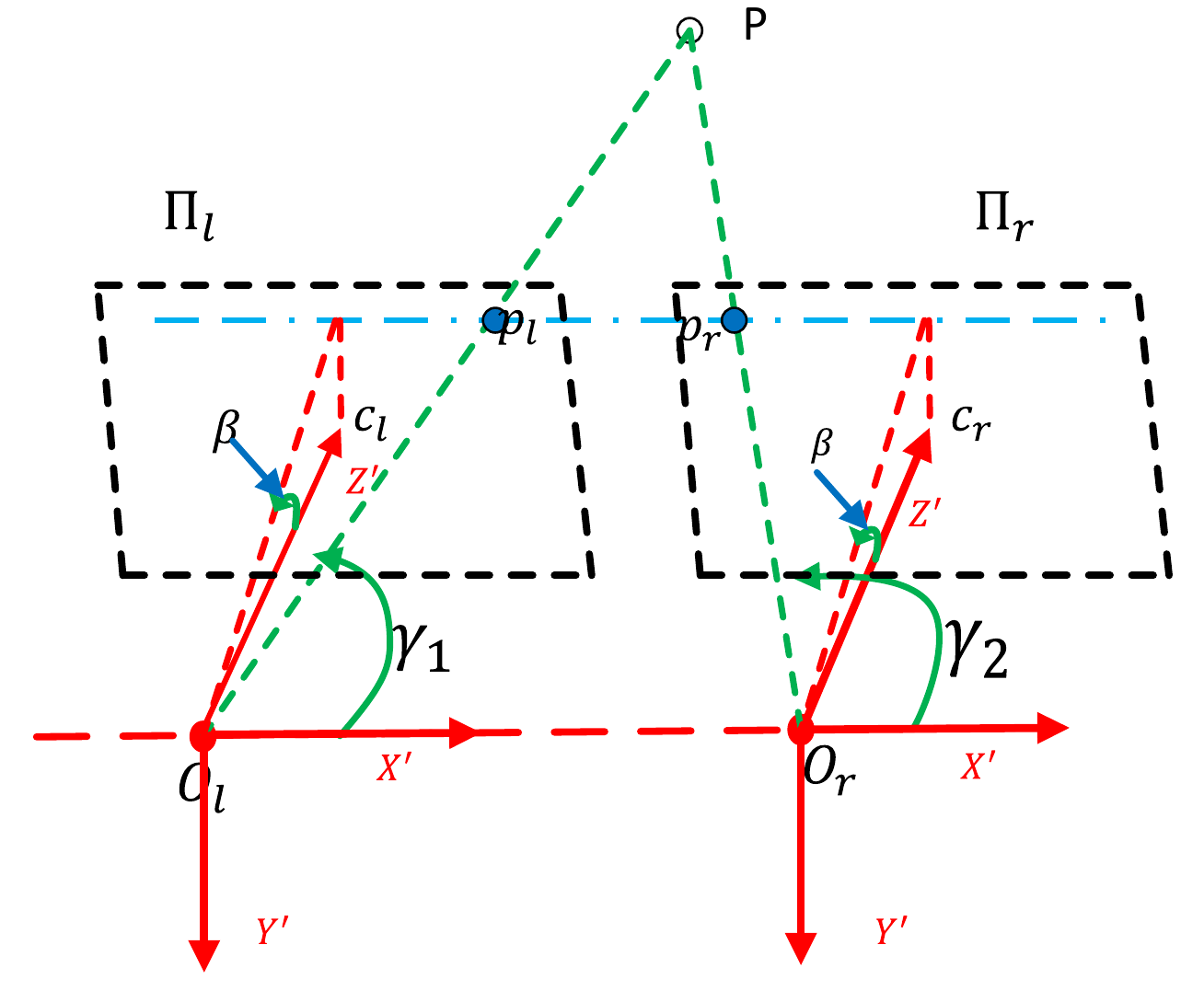} 
\caption{Re-project bearing vectors into new camera coordinates.}
\label{fig:ProjectionToImage}
\end{figure}

In \cite{kannala2006generic,scaramuzza2006toolbox}, general camera models with polynomial forms have been proposed to describe the different camera lenses. Inspired by their idea, we propose to realize both $\mathbf{\Psi}_{u}$ and $\mathbf{\Psi}_{v}$ with cubic functions as
\begin{equation}
 \label{Eq:Poly_nomial_rectification_Models}
\mathbf{\Psi}_{u,v} =  ~ c_{0} + c_{1}f(\theta)+ c_{2}f^2(\theta) +  c_{3}f^3(\theta),  \theta \in \{\beta \ \text{or} \ \gamma\},
\end{equation}
$\mathbf{c} = (c_0,~c_1,~c_2,~c_3)$ are coefficients required for estimation and $f(.)$ is a certain function of $\theta$, which can be of any form as long as it is monotonous within the whole field of view $[-0.5\theta_{max},0.5\theta_{max}]$. In this paper, we choose $f(\theta) = \theta$ for all experiment, which is similar with the Equidistance projection model \cite{kannala2006generic}. Particularly, in order to maintain uniqueness of the projection and preserve the scan lines order, both $\mathbf{\Psi}_{u}(.)$ and $\mathbf{\Psi}_{v}(.)$ must be monotonous within $[-0.5\theta_{max},0.5\theta_{max}]$.

\emph{Note:} Revisiting Eq.(\ref{Eq:projection_bearingvectors}), we find that $\widehat{u}$ and $\widehat{v}$ are functions of $b_{x}$, $b_{y}$, $b_{z}$ and coefficient matrix $\mathbf{A}$. The proposed cubic models $\mathbf{\Psi}_{u,v}$ can be considered as an approximation of Eq.(\ref{Eq:projection_bearingvectors}) because both $\mathbf{\Psi}_{u}$ and $\mathbf{\Psi}_{v}$ are expressed with $b_{x}$, $b_{y}$, $b_{z}$. 
%-------------------------------------------------------------------------------
\subsection{Resampling distortion minimization\label{subsec:Resampling_em}}
%---------------------------------------------------------------------
 \begin{figure}[hb!]
 \centering
 \includegraphics[width=0.3\textwidth]{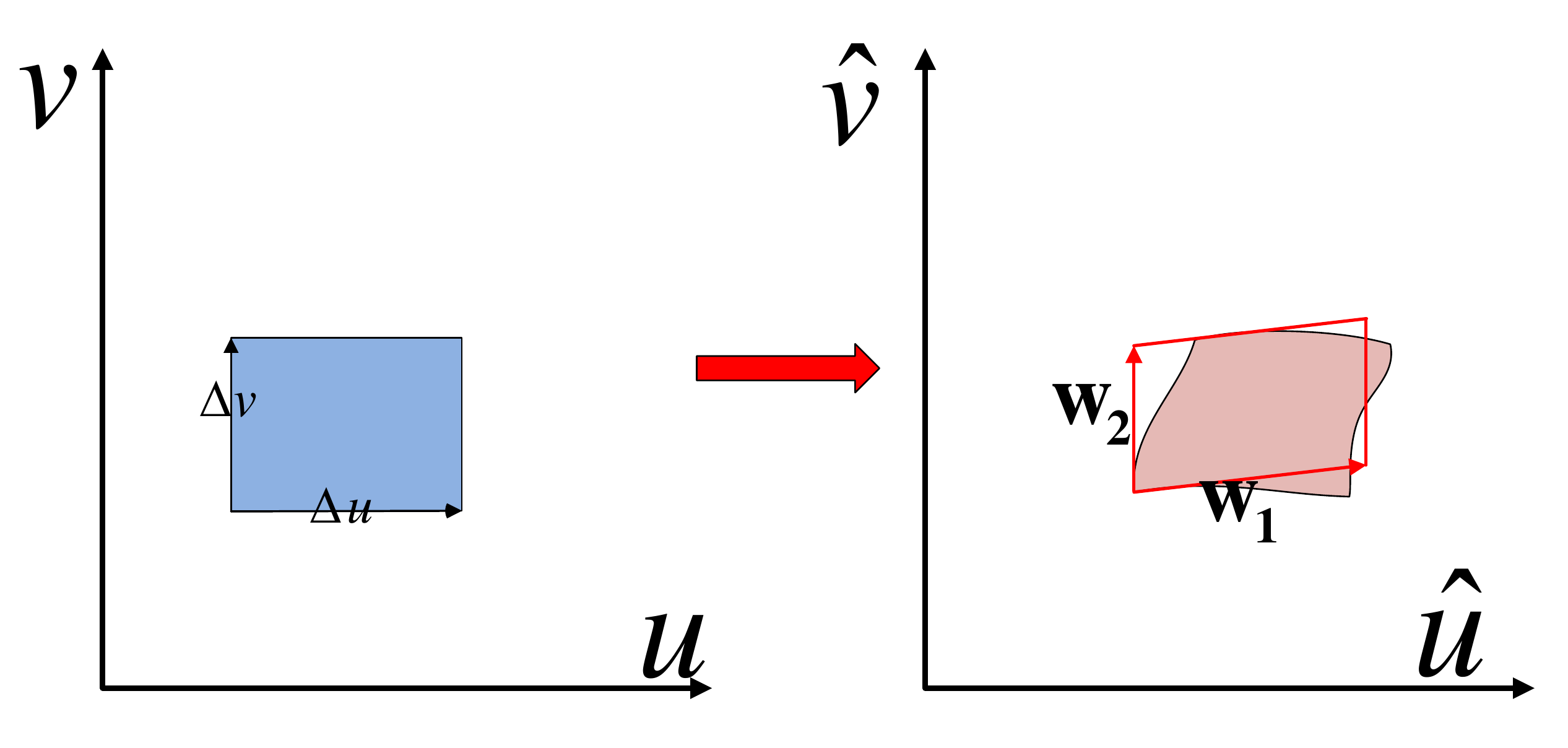} 
 \caption{Transforming a unit rectangle from original image into an approximate parallelogram in rectified image.}
 \label{fig:SmallPatchTransformation}
 \end{figure}
In \cite{gluckman2001rectifying}, Jacbian matrix has been employed to measure resampling distortion caused by  rectification which realized by using a global homography matrix. To facilitate understanding, the deformation of a unit rectangle is considered here. After rectification, a unit rectangle $(u,v)$ transformed via $\cal{H}$ to $(\widehat{u},\widehat{v})$ from original image to rectified image. As displayed in fig.(\ref{fig:SmallPatchTransformation}), the small region in the rectified image can be approximated by a parallelogram \cite{andrilli2010elementary} whose two sides are $\mathbf{w}_1 = [\frac{\partial \widehat{u}}{\partial u}, \frac{\partial \widehat{v}}{\partial u}]^T$ and $\mathbf{w}_2 = [\frac{\partial \widehat{u}}{\partial v}, \frac{\partial \widehat{v}}{\partial v}]^T$, which are tangent to the boundary of the transformed region. The resampling distortion of this small region can be measured from three aspects:
 \begin{enumerate}[1)]
 \item The area $S$ of this region is expect to be unchanged after rectification. $S$ can be computed as $S = |\mathbf{w}_1 \times \mathbf{w}_2| = |\frac{\partial \widehat{u}}{\partial u} \frac{\partial \widehat{v}}{\partial v}  - \frac{\partial \widehat{u}}{\partial v} \frac{\partial \widehat{v}}{\partial u}|.$ The rectangle will shrink or expand if $S$ is smaller or larger than $1$. Therefore, the loss for the change of area is defined as $\mathcal{L}_{area} = (S - 1)^2.$
 \item Meanwhile, the aspect ratio of the rectangle is also expect to be unchanged. Changing of this ratio will affect the distribution of pixels in two directions. The loss for this change is defined as 
 $\mathcal{L}_{ratio} = (|\mathbf{w}_{1}|-|\mathbf{w}_{2}|)^2.$
 \item Finally, the rectangle is also expected to be unskewed after rectification. The loss of skew deformation is defined as  
 $\mathcal{L}_{skew} = (\mathbf{w}_{1}^T\mathbf{w}_{2} - 0)^2.$
 \end{enumerate}
 Finally, the resampling distortion of the entire image can be obtained as
\begin{equation}
 {\cal L} = \int_{0}^{w}\int_{0}^{h}(\mathcal{L}_{area}+\alpha_{1}\mathcal{L}_{ratio}+ \alpha_{2}\mathcal{L}_{skew})dudv, 
 \label{Eq:resampling_distortion}
\end{equation} 
in which $\alpha_1$ and $\alpha_2$ are weight factors to balance of different kind of losses and $w$, $h$ are the width and height of image.

%-------------------------------------------------------------
\subsection{Optimization \label{sec:Optimization}}
%-------------------------------------------------------------
Finally, Eq.\eqref{Eq:Minimization_1} can be rewritten specifically as 
\begin{equation}
\small
\begin{aligned}
& \underset{\mathbf{R}_{1,2}, \mathbf{c}_{1,2}}{\text{minimize}}
& & \int_{0}^{w}\int_{0}^{h}(\mathcal{L}_{area}+\alpha_{1}\mathcal{L}_{ratio}+ \alpha_{2}\mathcal{L}_{skew})dudv \\ 
& \text{subject to}
& & (\mathbf{\Psi}_2(\mathbf{R}^{T}_{2} \mathbf{\Phi}_{2}(\mathbf{x}_{2i})))^{T}[\mathbf{e}]_\times\mathbf{\Psi}_1(\mathbf{R}^{T}_1\mathbf{\Phi}_{1}(\mathbf{x}_{1i})) = 0,\\
&
& & \mathbf{\Psi}_{1,2}^{'}(\theta) > 0, \  \theta \in [-0.5\theta_{max},0.5\theta_{max}],\\
\end{aligned}
\label{Eq:Minimization}
\end{equation}
\normalsize
where $\mathbf{c}_{1,2}$ are the coefficients of $\mathbf{\Psi}_{1,2}$ and $\mathbf{\Psi}_{1,2}^{'}(.)$ represents as the derived function of $\mathbf{\Psi}_{1,2}$.

Minimization of Eq.\eqref{Eq:Minimization} is non-linear optimization problem with two constraints: monotonic and epipolar line constraint. In the real application, the monotonic constraint of $\mathbf \Psi $ is technically realized by some control points which are uniformly selected in $[-0.5\theta_{max},0.5\theta_{max}]$. In addition, the epipolar constraint can be achieved via relative pose ($\mathbf{R}$,$\mathbf{t}$) \cite{li2006five,yang2014optimal} estimation and following local transformation $\mathbf{\Psi}_1$ and $\mathbf{\Psi}_2$ will not change the epipolar line constraint due to our special implementation. However, due to noise in features extraction, matching and relative pose estimation process, the epipolar line constraint is not exactly satisfied. Then, we relax the epipolar constraint as 
\begin{equation}
 |\mathbf{\Psi}_2(\mathbf{R}^{T}_{2} \mathbf{\Phi}_{2}(\mathbf{x}_{2i})))^{T}[\mathbf{e}]_\times\mathbf{\Psi}_1(\mathbf{R}^{T}_{1}\mathbf{\Phi}_{1}(\mathbf{x}_{1i})| \leq \epsilon.
 \end{equation}
A point is considered to be satisfied epipolar constraint as long as the displacement in $y$ direction below a certain threshold $\epsilon$ (e.g., 1 pixel).

After obtaining $\mathbf{R}_1$ and $\mathbf{R}_2$, only the coefficients $\mathbf{c}_{1,2}$ are required to be optimized. Three types of loss $L_{area}$, $L_{ratio}$ and $L_{skew}$ are defined in subsection \ref{subsec:Resampling_em}, which can be easily computed by taking $\frac{\partial  \mathcal{H}(\mathbf{x})}{\partial u}$ and $\frac{\partial \mathcal{H}(\mathbf{x})}{\partial v}$. Theoretically, all the image pixels should be taken into consideration. For efficiency purpose, around 500 pixels evenly distributed all over the image are selected for computing the loss. We empirically choose $\alpha_1 = \alpha_2 = 0.5$ for all our experiments. 

Finally, Eq.\eqref{Eq:Minimization} is solved by employing a classical constrained nonlinear optimization solver ``Interior Point Algorithm'' \cite{byrd1999interior}, which can converge after hundreds of iterations. Obviously, the loss defined in Eq.\eqref{Eq:Minimization} is non-convex because it contains quadratic and second cross terms. However, in the real implementation, we find that the optimization is not sensitive to the initial values, even taking randomly values as inputs. Two examples of energy curve are shown in fig.(\ref{fig:LossChanges}). From these figures, we can find that the energy function is nearly convex over a large range.
 \begin{figure}[t!]
 \centering
 \begin{subfigure}[t]{0.23\textwidth}
 \includegraphics[width=1\textwidth]{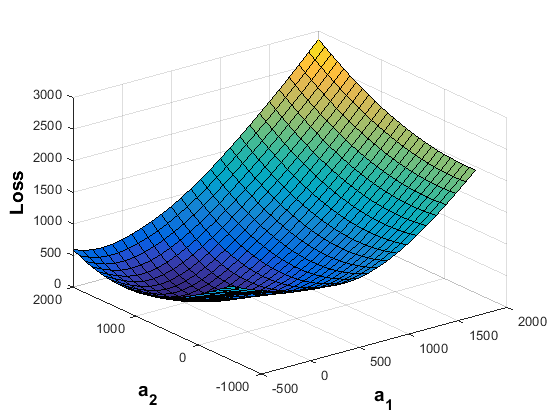} 
  \caption{}
 \label{subfig:Losschange_1}
 \end{subfigure} 
 ~
 \begin{subfigure}[t]{0.23\textwidth}
 \includegraphics[width=1\textwidth]{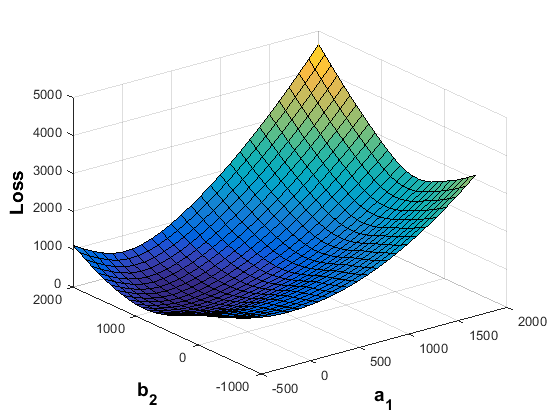} 
 \caption{}
 \label{subfig:Losschange_2}
 \end{subfigure}
\caption{Energy curves by changing two variables and keeping other variables constant.}
\label{fig:LossChanges}
\end{figure}

\vspace{0.2cm}
%-------------------------------------------
\section{Rectification and 3D Reconstruction\label{sec:3D_reconstruction}}
%----------------------------------------
After obtaining the optimal $\mathbf{R}_{1,2}$ and $\mathbf{c}_{1,2}$, the rectification and 3D reconstruction can be implemented easily. 
%--------------------------
\subsection{Stereo Rectification}
%--------------------------
The expression of ${\cal H}$ can be easily obtained after obtain $\mathbf{R}_{1,2}$ and $\mathbf{c}_{1,2}$. At the same time, the backward ${\cal H}^{-1}$ of ${\cal H}$ is also required for rectification. ${\cal H}^{-1}$ is responsible for mapping a pixel $\mathbf{x}$ from the rectified image to the original one. This transformation indeed exists because ${\cal H}$ is monotonous within $[-0.5\theta_{max},0.5\theta_{max}]$. Furthermore, ${\cal H}^{-1}$ can also be expressed with a polynomial which can be obtained similarly as ${\cal H}$. Given ${\cal H}^{-1}$, the rectified image can be constructed pixel by pixel from its corresponding location in the original image. Finally, the main steps of our proposed rectification method can be summarized in Algorithm \ref{alg:Fisheye_Lens_Rectification}.
\begin{algorithm}[ht!]
\caption{Stereo Rectification \label{alg:Fisheye_Lens_Rectification}}
\begin{algorithmic}[1]
\Require{%
\begin{minipage}[t]{1\textwidth}
- A pair of image $(\mathbf{I}_{1}, \mathbf{I}_{2})$;\\
- Intrinsic camera projection model $\mathbf{\Phi}$;
\end{minipage}}
\Ensure{%
\begin{minipage}[t]{1\textwidth}
~~- Rectified image pair ($\mathbf{I}^{'}_{1}$,$\mathbf{I}^{'}_{2}$) and model $\mathcal{H}(.)$; 
\end{minipage}}
\noindent \begin{raggedright}
\rule[0.15dd]{0.999\linewidth}{1pt}
\par\end{raggedright}
\State $\blacktriangleright\;$Robust sparse features matching between $\mathbf{I}_{1}$ and $\mathbf{I}_{2}$;
\State $\blacktriangleright\;$Robust camera pose estimation \cite{li2006five};
\State $\blacktriangleright\;$Compute $\mathbf{R}_1$ and $\mathbf{R}_2$ based on \cite{bouguet2004camera} for rigid transformation;
\State $\blacktriangleright\;$Compute the non-linear transformation model $\mathbf{\Psi}$ by minimizing Eq.(\ref{Eq:Minimization}); 
\State $\blacktriangleright\;$Compute the backward model ${\cal H}^{-1}$ ;
\State $\blacktriangleright\;$Build the rectified image pair ($\mathbf{I}^{'}_{1}, \mathbf{I}^{'}_{2}$) via ${\cal H}^{-1}$.
\end{algorithmic}
\end{algorithm}
\normalsize
%--------------------------------------
\subsection{3D Reconstruction}
%--------------------------------------
After rectification, all the correspondences are aligned on the same scan lines. Traditionally dense matching approaches \cite{zhang2015meshstereo} can be employed directly for disparity computation. After obtaining the dense disparity map, 3D information can be reconstructed based on ${\cal H}^{-1}$. Assuming a world point $\mathbf{P}$, $(X_1, Y_1, Z_1)$ and $(X_2, Y_2, Z_2)$ are its corresponding coordinates in left and right rectified camera coordinates, then we have $[X_2, Y_2, Z_2]^T = [X_1 - b, Y_1, Z_1]^T$. Here, $b$ is the baseline between two cameras which can be obtained from calibration or relative pose estimation process. $(u_{1}, v_{1})$ and $(u_{2}, v_{2})$ are assumed to be the corresponding image locations in the left and right rectified images, then we have $u_{2} = u_{1}-d$ and $v_{1} = v_{2}$, where $d$ is the computed disparity value at location $(u_{1}, v_{1})$. 

Because $f(\theta) = \theta$ is taken as the base function of $\mathbf{\Psi}(.)$ here, we have  
\begin{equation*}
\gamma_1   = \mathbf{\Psi}^{-1}_{u}(u_{1}), ~ \gamma_2   = \mathbf{\Psi}^{-1}_{u}(u_{1} - d),~ \beta = \mathbf{\Psi}^{-1}_{v}(v_{1}).
\end{equation*}
Furthermore, as displayed in fig. \ref{fig:ProjectionToImage}, $\beta$ and $\gamma_{1,2}$ can also be expressed as
\begin{equation*} \beta = \arctan \frac{Y_1}{Z_1} = \arctan \frac{Y_2}{Z_2}, ~ \gamma_{1,2} = \arctan \frac{X_{1,2}}{\sqrt{Y^2_{1,2} + Z^2_{1,2}}}.  
\end{equation*}
Finally, 3D point $\mathbf{P}$ can be reconstructed as
\begin{equation}
\begin{aligned}
X_1 =   &  \frac{b \tan \gamma_1}{ \tan \gamma_1  - \tan \gamma_2}, ~ X_2 =  \frac{b \tan \gamma_2}{ \tan \gamma_1  - \tan \gamma_2},\\
Y   =   &  \frac{b\tan\beta}{(\tan \gamma_1  - \tan \gamma_2)\sqrt{1+\tan^2\beta}}, \\
Z   =   &  \frac{b}{(\tan \gamma_1  - \tan \gamma_2)\sqrt{1+\tan^2\beta}}. \\
\end{aligned}
\label{Eq:3D_reconstruction}
\end{equation}
\vspace{0.25cm}
%--------------------------------------------------------------
\section{Experimental Results \label{sec:ExperimentalResults}}
%--------------------------------------------------------------
The effectiveness  and robustness of our proposed method have been verified on different types of cameras: fisheye lens and traditional perspective images.  

%---------------------------------------------------------------------------
\subsection{Rectification for real fisheye images}
%--------------------------------------------------------------------------
Two types of real fisheye lens cameras are also used to evaluate our method. One is a Sony HDR-As200V motion camera (resolution: $1920 \times 1080$, lens: \SI{2.8}{\milli\meter}), offering an ultra-wide field of view with \ang{135} in horizontal direction and \ang{90} in the vertical direction. Another is a Canon camera (resolution $2736 \times 1842$, fisheye lens: \SI{8}{\milli\meter}) whose field of view is close to \ang{180}.

Different with a traditional stereo rig, two images from a motion (dominant motion is along X-direction) monocular camera are used here. The relative pose between two images are estimated by sparse matched features (e.g., SURF \cite{bay2006surf}). RANSAC and bucketing techniques are employed to remove outliers for robust estimation. Local bundle adjustment is also used to refine the camera pose and remove outliers and finally only the reliable correspondences are used for the following process. Two approaches are used for comparison here: 1) ``Conventional'': conventional method by using a perspective model (the focal length of the rectified image is chosen with minimal resampling distortion); 2) ``Abraham'': approach proposed by Abraham \emph{et al.} in \cite{abraham2005fish}, where the resampling distortion didn't take into consideration.  

Four pair of images have been taken for evaluation. The performances are evaluated on two aspects: rectification error and resampling distortion. The rectification error is measured as the average y-disparity of correspondences on the rectified image pair. For each rectified image pair, SURF features are used to obtain feature correspondences between the left and right images. During the matching process, only the feature correspondences whose matching distance is below a certain threshold (e.g., 1.5) are decided as reliable correspondences. The resampling distortion is specifically computed by using Eq.(\ref{Eq:resampling_distortion}). Similar with the optimization, around 500 pixels evenly distributed all over the image are selected for distortion computation. To be fair, same image pixels are used to compute resampling distortion for different methods. 

Quantitative evaluation results are shown in fig.(\ref{fig:Rec_eva_fish}). Subfig.(\ref{subfig:rec_error_fish}) illustrates the rectification error, where we can find that the rectification error is less than 1 pixel for all algorithms, which means that the epipolar line constraint is satisfied and all the correspondences are aligned on the same image row. Among the three, ``Abraham'' gives the smallest rectification error, while our proposed method is very close to their method. For the resampling distortion, our proposed method achieved the best performance all over the four examples. Furthermore, compared to the ``Conventional'' method, the distortion has been dramatically reduced. Compared with ``Abraham'' method, the resampling distortion has also been reduced about 30\% on the four examples. 

\begin{figure}[ht!]
 \centering
 \begin{subfigure}[t]{0.23\textwidth}
 \includegraphics[width=1\textwidth]{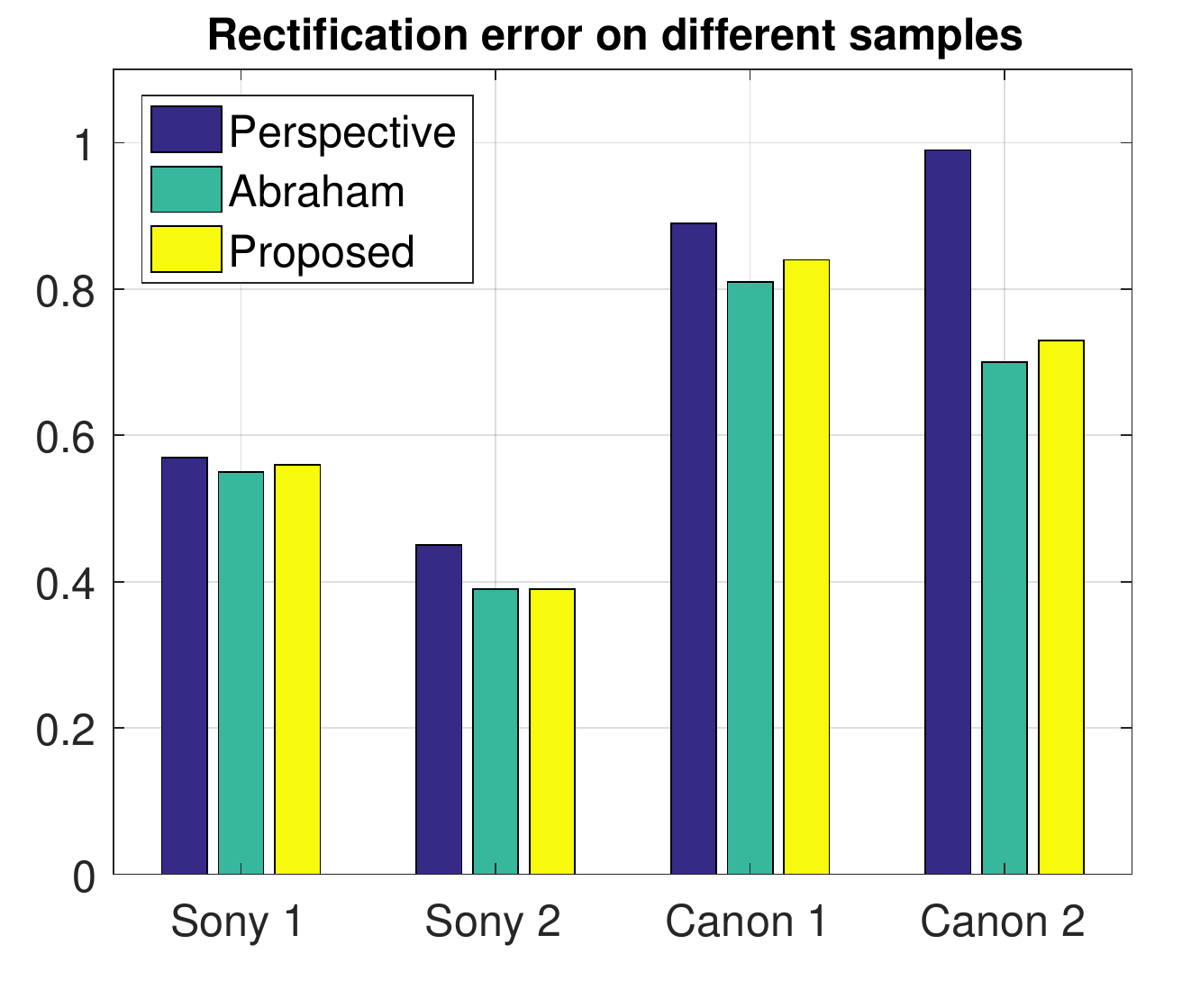} 
  \caption{}
 \label{subfig:rec_error_fish}
 \end{subfigure} 
 ~
 \begin{subfigure}[t]{0.23\textwidth}
 \includegraphics[width=1\textwidth]{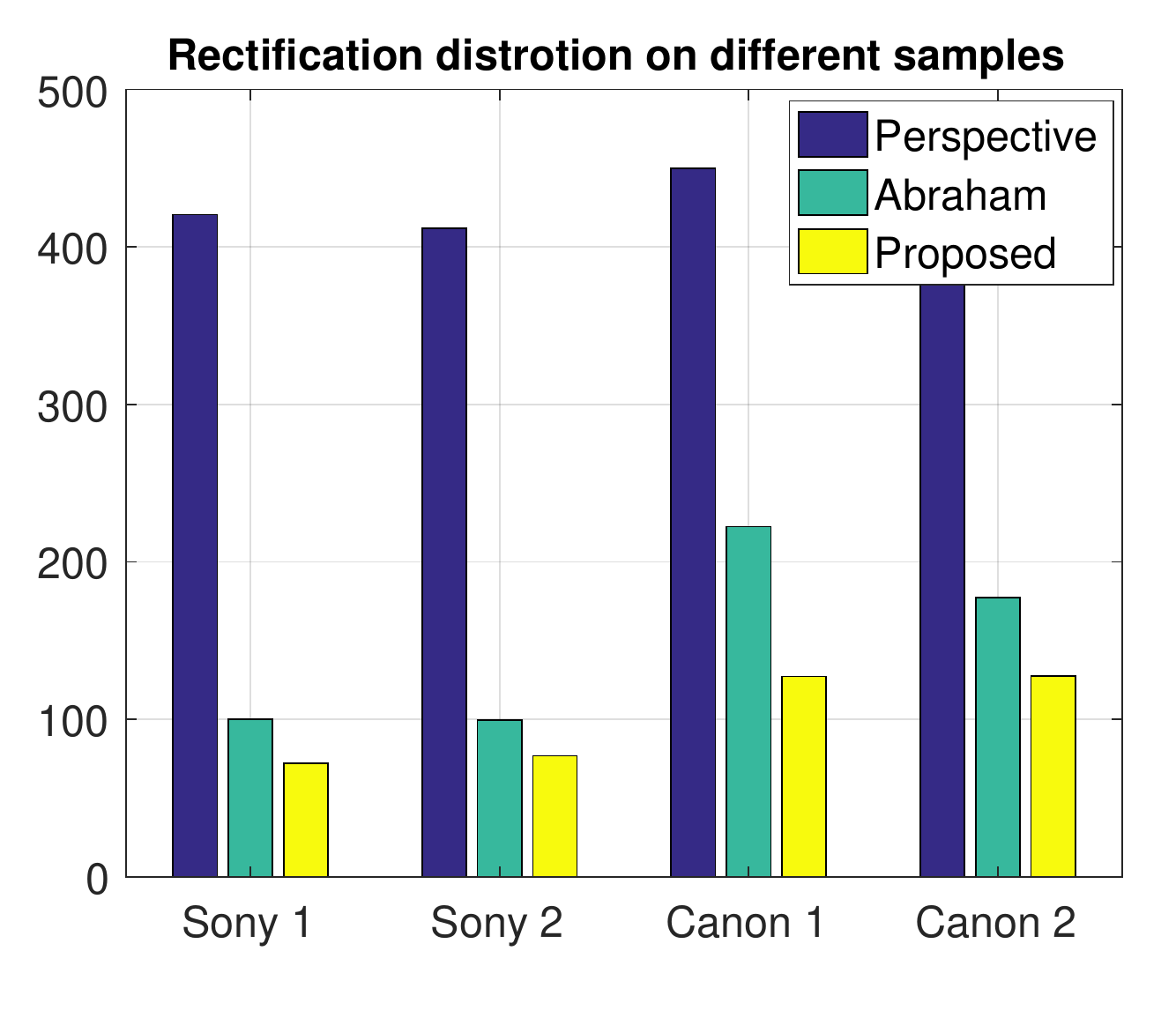} 
  \caption{}
 \label{subfig:rec_distor_fish}
 \end{subfigure}
 \caption{Rectification evaluation on real fisheye images. Subfig.(\ref{subfig:rec_error_fish}) and (\ref{subfig:rec_distor_fish}) give the rectification error and resampling distortion (Eq.(\ref{Eq:resampling_distortion})) for different approaches on real fisheye images.}
\label{fig:Rec_eva_fish}
\end{figure}
Fig.(\ref{fig:RectifiedImagesSonyCamera}) and fig.(\ref{fig:RectifiedImagesCanonCamera}) display two rectification results by using different methods. From these figures, we can obviously find that after rectification, the epipolar line has been well persevered. By using our proposed method, the local information has been largely persevered in the rectified image and the field of view of the rectified image is nearly the same with the original image. 
\begin{figure}[h!]
\centering
\begin{subfigure}[t]{0.475\textwidth}
\centering
\includegraphics[width=0.9\textwidth]{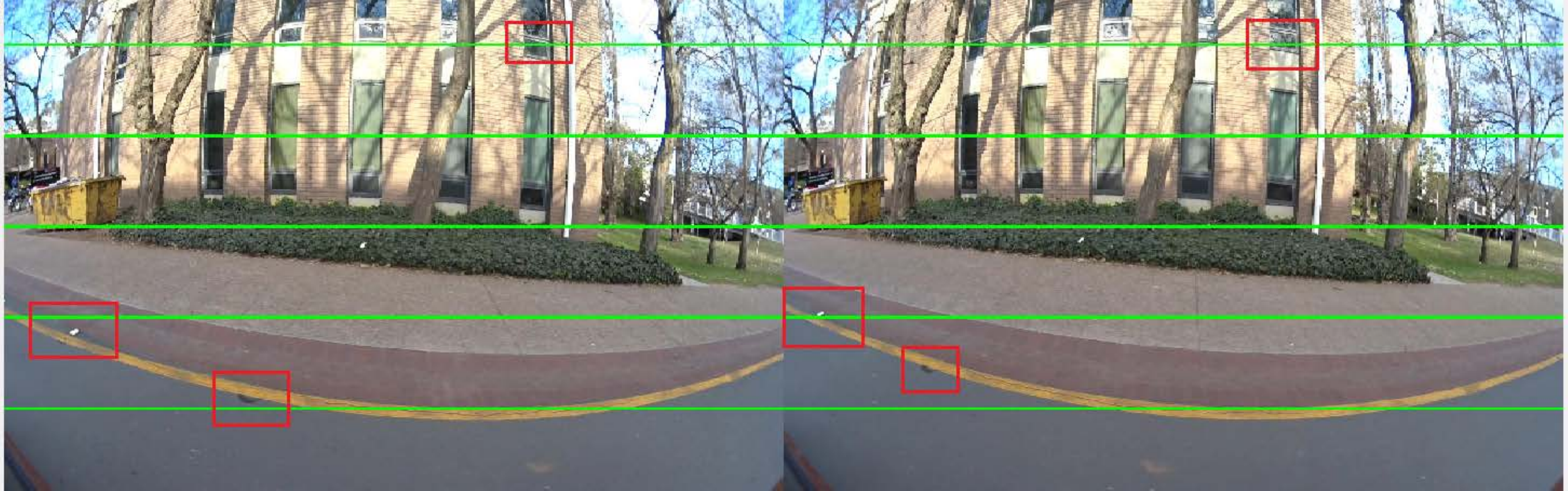} 
\caption{Fisheye image pair.}
\label{subfig:SonycamerOriginalImage}
\end{subfigure}
\begin{subfigure}[t]{0.475\textwidth}
\centering
\includegraphics[width=0.9\textwidth,height = 0.325\textwidth]{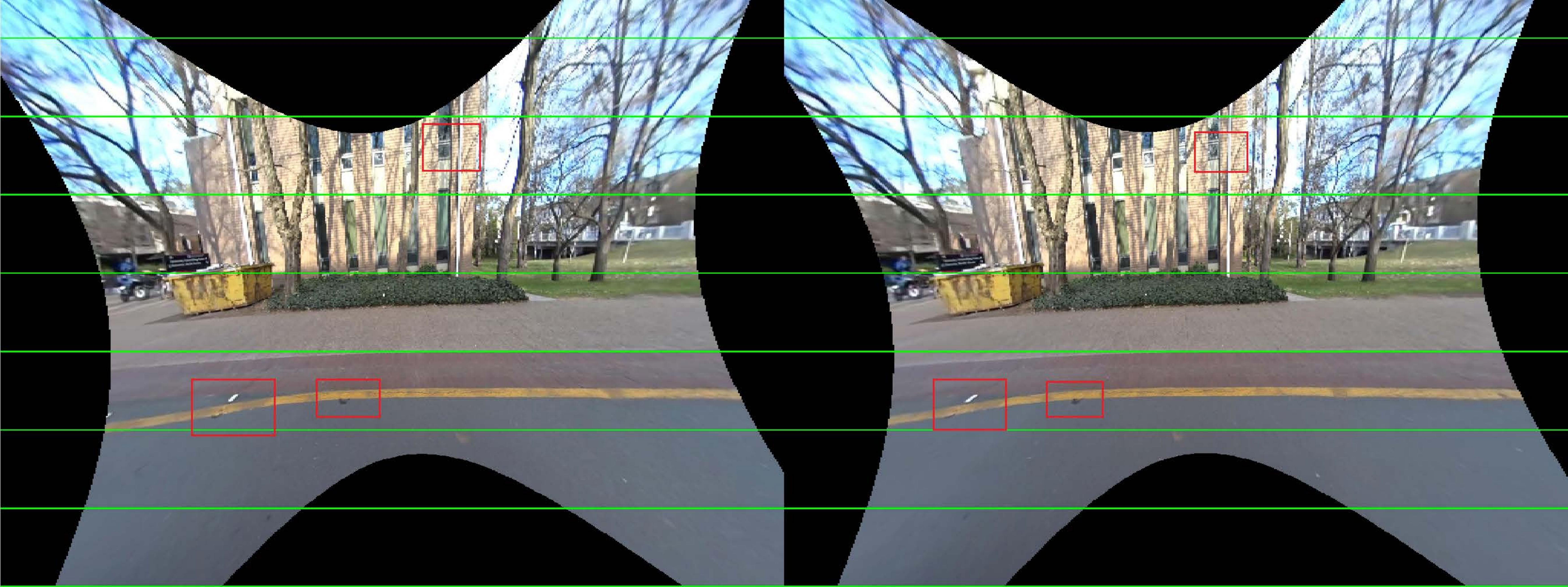} 
\caption{Rectified image pair using perspective model.}
\label{subfig:Perspective_rectification_snoy}
\end{subfigure}
\centering
\begin{subfigure}[t]{.475\textwidth}
\centering
\includegraphics[width=0.9\textwidth,height = 0.35\textwidth]{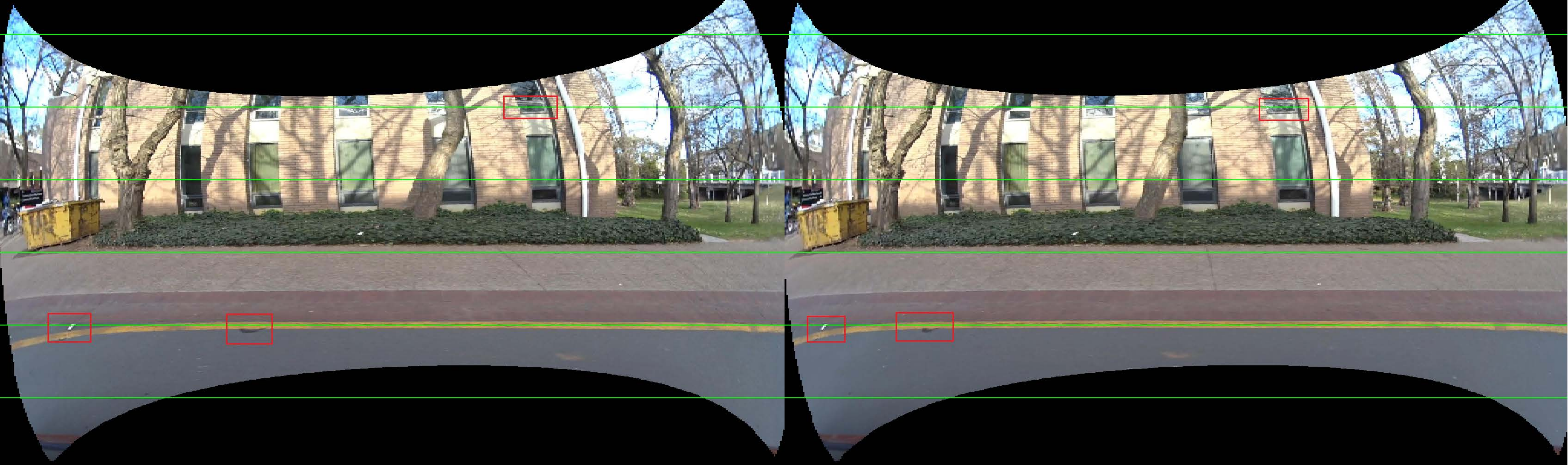} 
\caption{Rectified image pair using proposed method.}
\label{subfig:Proposed_rectification_snoy}
\end{subfigure}
\centering
\caption{Rectification results Sony Camera. Lines and rectangle are used to highlight that the epipolar constraint is satisfied after rectification.}
\label{fig:RectifiedImagesSonyCamera}
\end{figure}
\begin{figure}[h!]
\centering
\begin{subfigure}[t]{0.475\textwidth}
\centering
\includegraphics[width=0.9\textwidth]{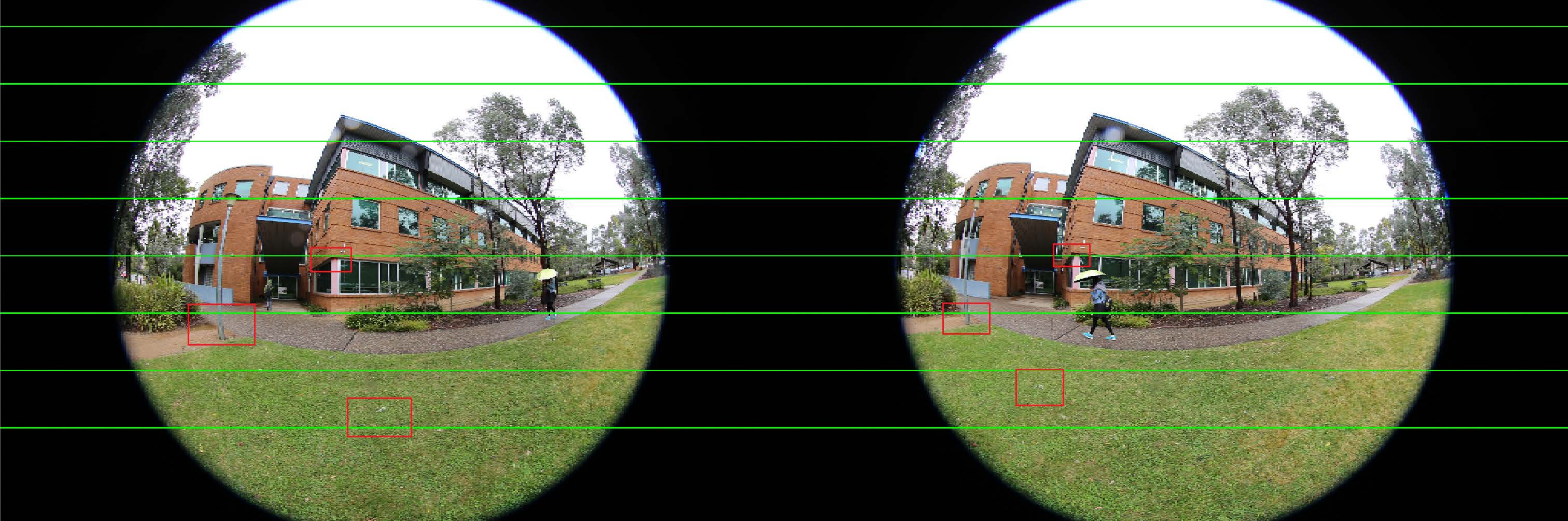} 
 \caption{Fisheye image pair.}
\label{subfig:CanoncamerOriginalImage}
\end{subfigure}
\begin{subfigure}[t]{0.475\textwidth}
\centering
\includegraphics[width=0.9\textwidth]{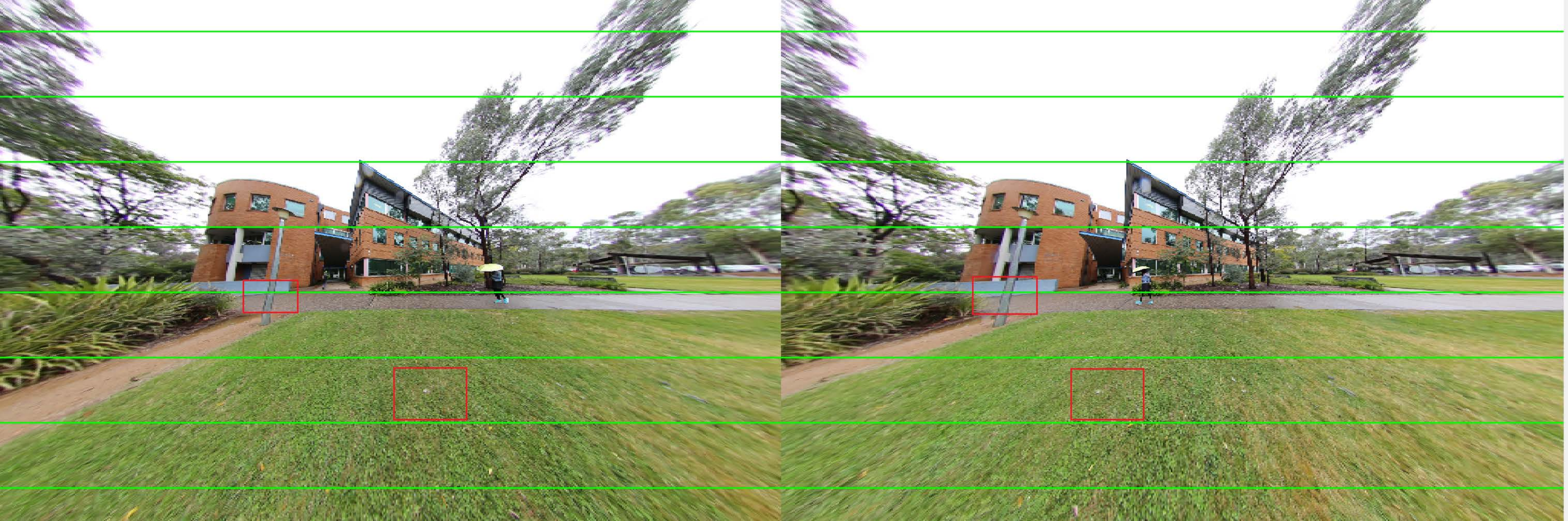} 
\caption{Rectified image pair using perspective model.}
\label{subfig:Perspective_rectification_Canon}
\end{subfigure}
\centering
\begin{subfigure}[t]{.475\textwidth}
\centering
\includegraphics[width=0.9\textwidth]{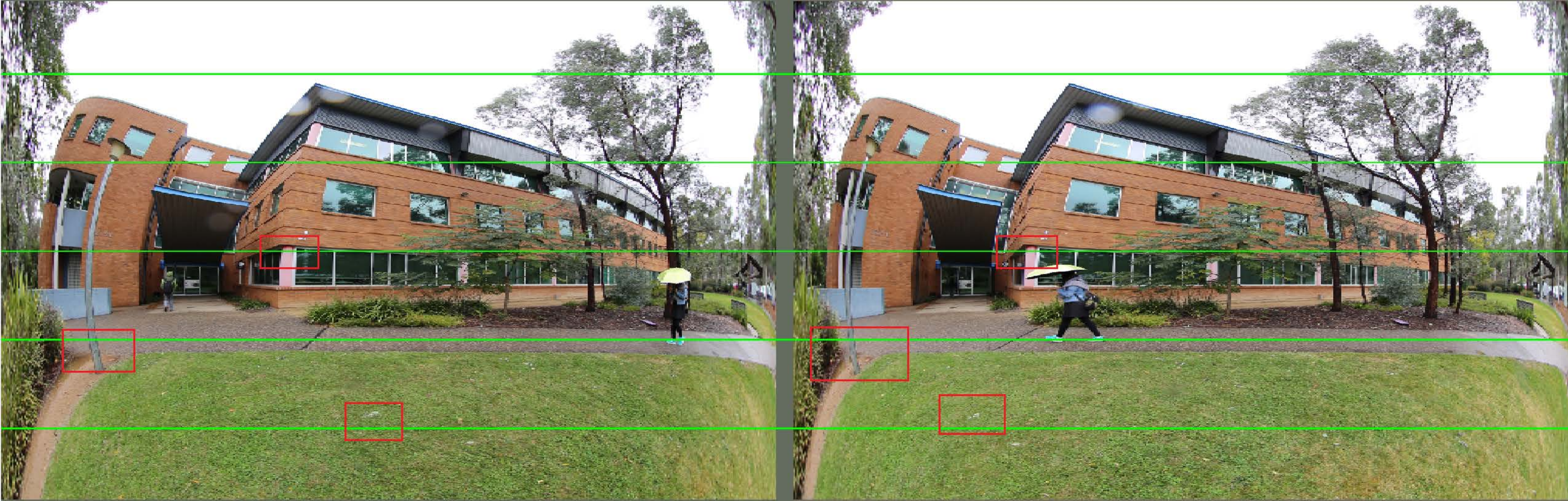} 
\caption{Rectified image pair using proposed method.}
\label{subfig:Proposed_rectification_Canon}
\end{subfigure}
\centering
\caption{Rectification results for Canon camera. Lines and rectangle are used to highlight that the epipolar constraint is satisfied after rectification.}
\label{fig:RectifiedImagesCanonCamera}
\end{figure}

%------------------------------------------
\subsection{3D reconstruction result on real fisheye images}
%-------------------------------------------
The 3D reconstruction has been verified on real fisheye images. First, dense disparity map has been computed by using \cite{zhang2015meshstereo}, then Eq.\eqref{Eq:3D_reconstruction} is used for 3D reconstruction. Fig.\ref{fig:3D_rec} displays 3D reconstruction results with two different types of rectification methods. To save space, the original fisheye image pair and the rectified image pairs are corresponding to subfig.(\ref{subfig:SonycamerOriginalImage}), (\ref{subfig:Perspective_rectification_snoy}) and (\ref{subfig:Proposed_rectification_snoy}) respectively. Two different views of the reconstruction results are shown to demonstrate the performances of different methods.

Subfig.(\ref{subfig:Reconstruction}) and (\ref{subfig:Reconstruction_pers}) display the 3D reconstruction results by using proposed and traditional rectification methods respectively. From subfig.(\ref{subfig:Reconstruction}), we can see that the structure of the building, road, trees and vegetation have been reconstructed densely. The detailed structure of the trunk and the building's wall can be seen clearly by using Meshlab \cite{cignoni2008meshlab}. Qualitatively, as shown in subfig. \ref{subfig:Reconstruction}, the 3D reconstruction performs better at the center of the image due to less fisheye distortion and the accuracy of the reconstruction decrease with the increasing of the distortion. On the right side of the image, the color of the sky has been misaligned on the tree due to error in the dense disparity computation and the uncertainty in the rectification process, e.g., relative camera pose estimation. The performance can be improved by using a real stereo camera and the relative camera pose can be obtained via pre-calibration process.

\begin{figure}[ht!]
\centering
\begin{subfigure}[t]{0.475\textwidth}
\centering
\includegraphics[width=0.9\textwidth,height= 0.3\textwidth]{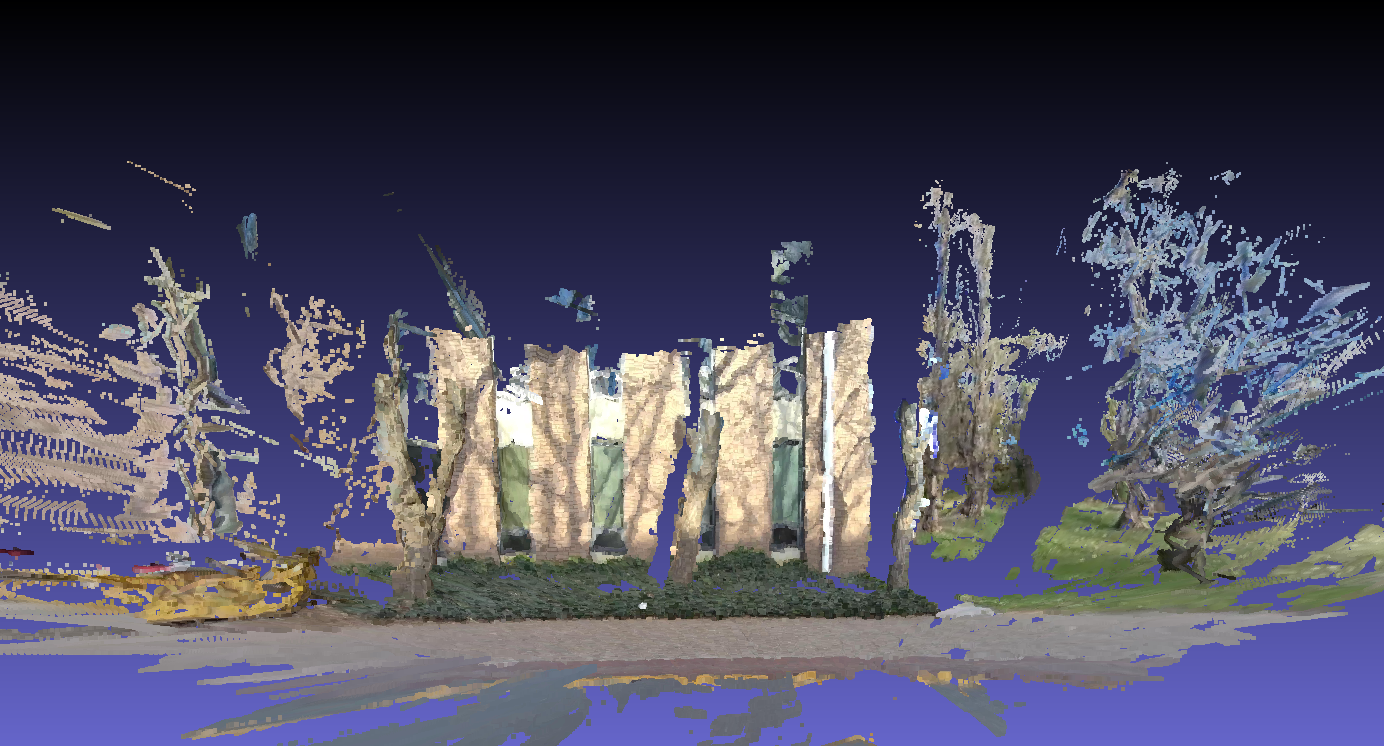}
\\
\centering
\includegraphics[width=0.9\textwidth,height= 0.3\textwidth]{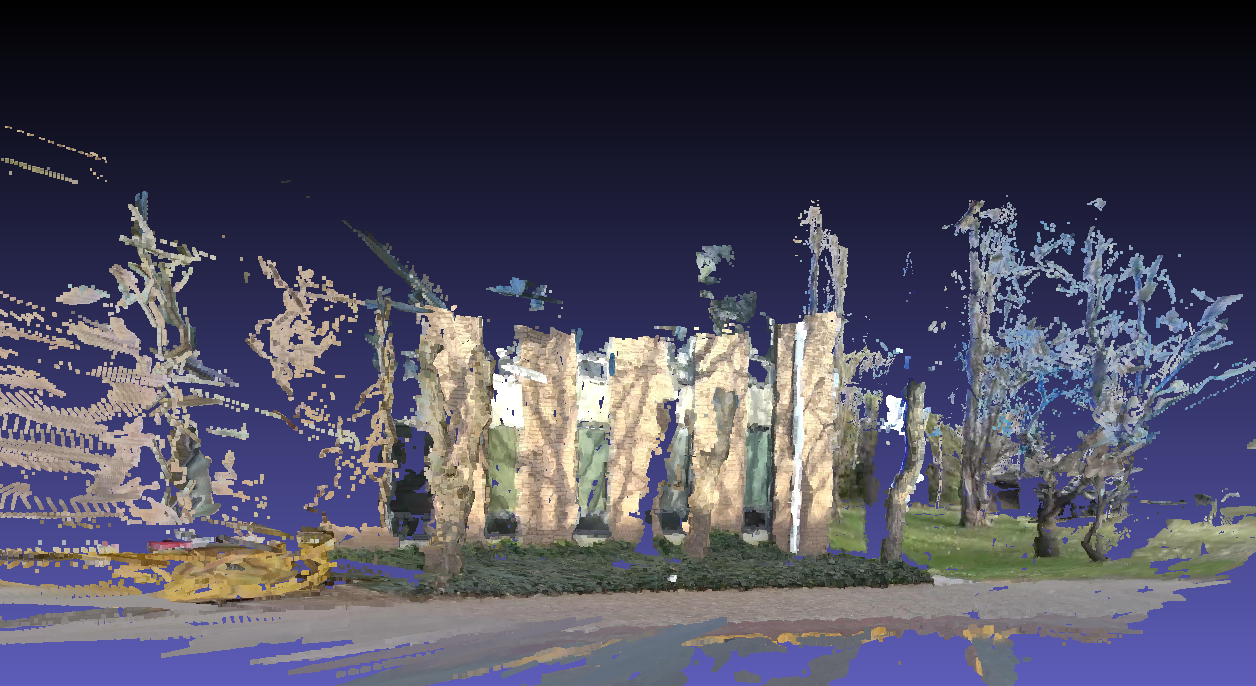} 
\caption{3D reconstruction by proposed rectification method.}
\label{subfig:Reconstruction}
\end{subfigure}
\begin{subfigure}[t]{0.475\textwidth}
\centering
		\includegraphics[width=0.9\textwidth,height= 0.3\textwidth]{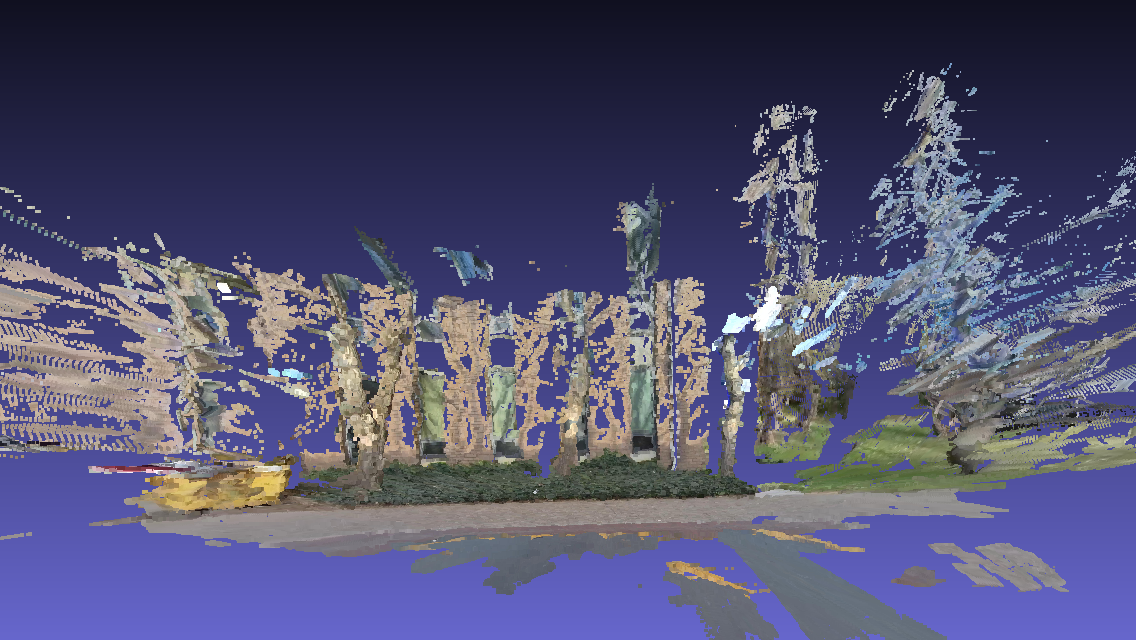}
		\\
        \centering
		\includegraphics[width=0.9\textwidth,height= 0.3\textwidth]{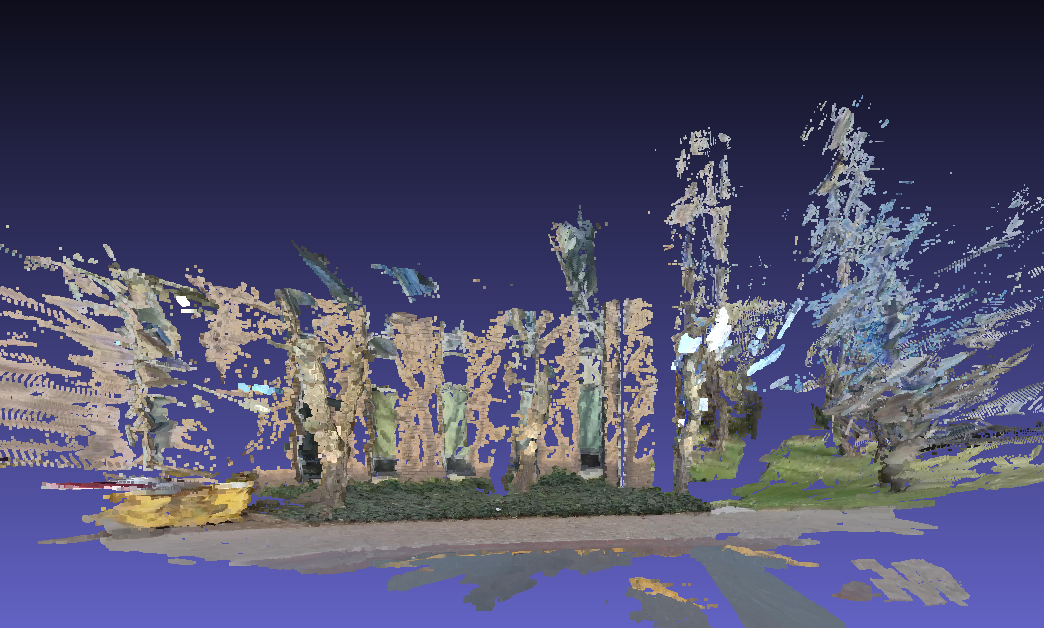} 
		\caption{3D reconstruction by traditional rectification method \cite{bouguet2004camera}.}
		\label{subfig:Reconstruction_pers}
	\end{subfigure}
\caption{3D reconstruction results on the real fisheye images by using different rectification methods. }
\label{fig:3D_rec}
\end{figure}
Comparing subfig.\ref{subfig:Reconstruction} with \ref{subfig:Reconstruction_pers}, we can see that the the former is qualitatively better than the latter, such as: 1) the building is reconstructed more dense in 3D because the texture information has been well kept during rectification in subfig.(\ref{subfig:Proposed_rectification_snoy}) while the texture details have been compressed in subfig.(\ref{subfig:Perspective_rectification_snoy}). 2) the trees on the right side of the image are reconstructed slightly better in subfig.(\ref{subfig:Reconstruction}) than subfig.(\ref{subfig:Reconstruction_pers}). That's because the resampling distortion is serious in subfig.(\ref{subfig:Perspective_rectification_snoy}) and this will hinder the following dense matching process. 
%--------------------------------------------
\subsection{Rectification results on perspective images}
%--------------------------------------------
Our proposed method can also be applied for perspective images. The public SYNTIM dataset \footnote{Webpage: http://perso.lcpc.fr/tarel.jean-philippe/syntim/paires.html} has been used to evaluate our proposed method. Two typical global homography based rectification methods: ``Hartley'' \cite{hartley1999theory} and ``Fusiello'' \cite{fusiello2008quasi} have been taken for evaluation here.
\begin{figure}[h!]
 \centering
 \begin{subfigure}[t]{0.23\textwidth}
 \includegraphics[width=1\textwidth]{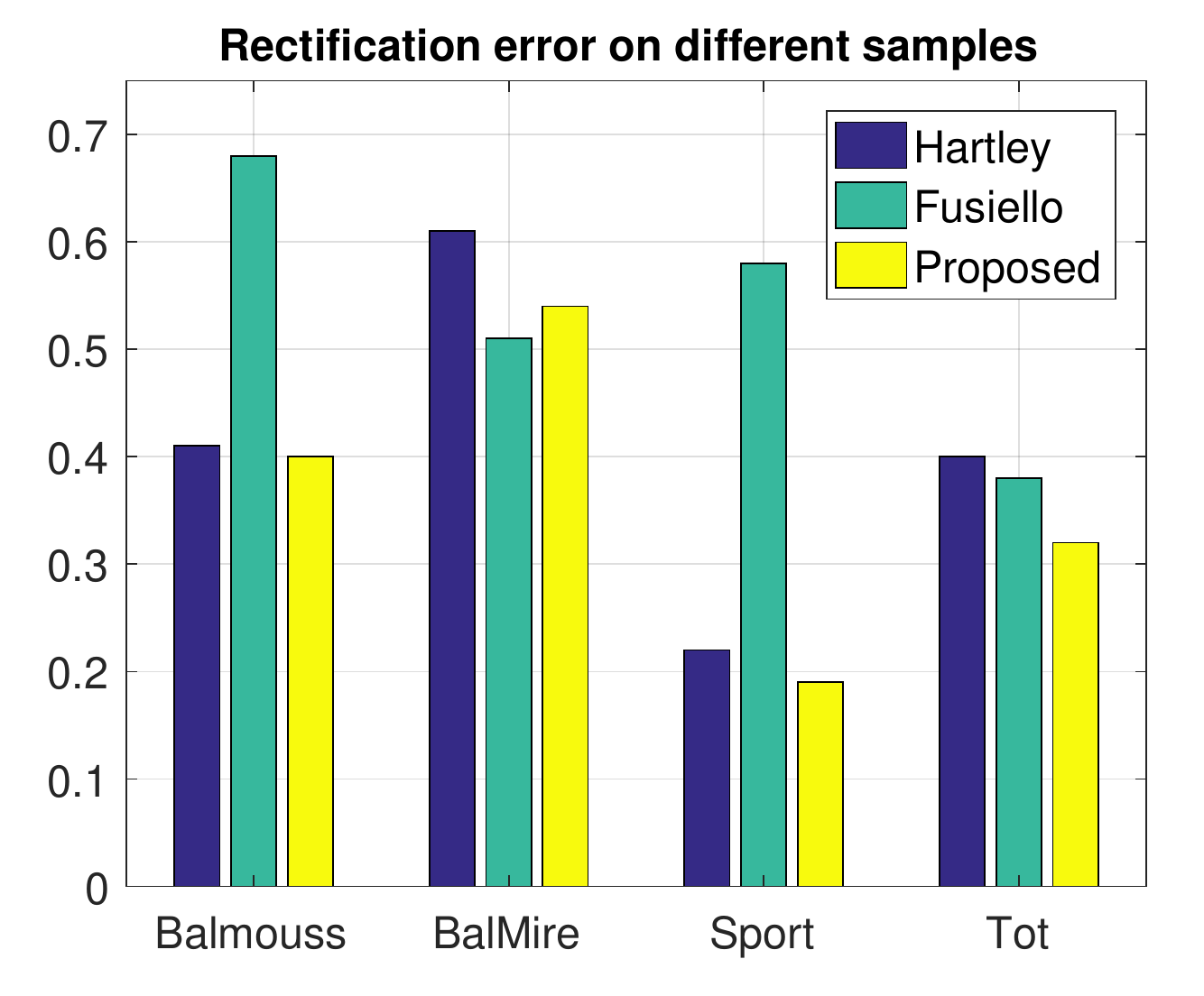} 
  \caption{}
 \label{subfig:rec_error_per}
 \end{subfigure} 
 ~
 \begin{subfigure}[t]{0.23\textwidth}
 \includegraphics[width=1\textwidth]{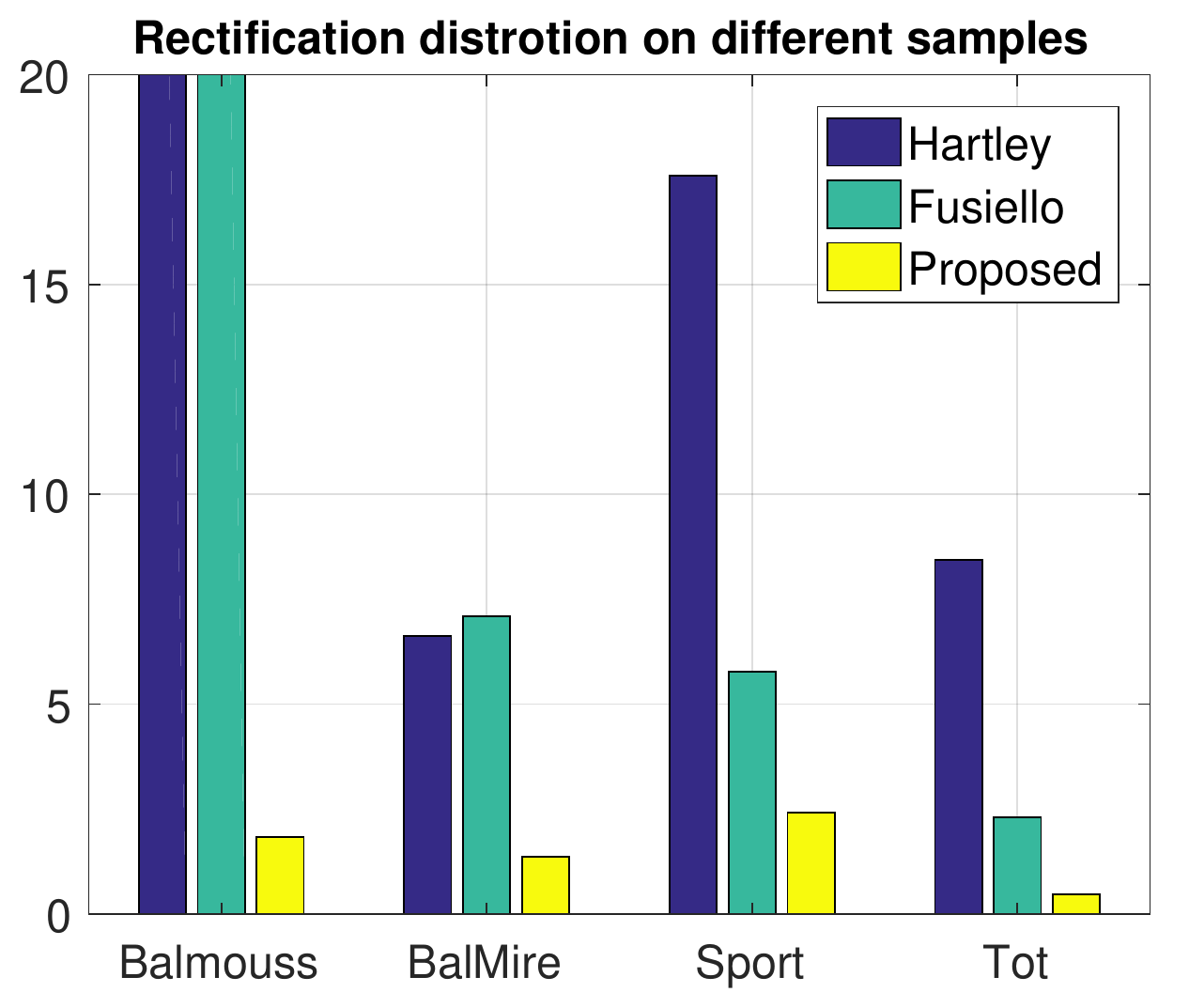} 
  \caption{}
 \label{subfig:rec_distor_per}
 \end{subfigure}
 \caption{Rectification evaluation on perspective images. Subfig.(\ref{subfig:rec_error_per}) and (\ref{subfig:rec_distor_per}) give the rectification error and resampling distortion (Eq.(\ref{Eq:resampling_distortion})) for different approaches on four images.}
 \label{fig:Rec_eva_per}
 \end{figure}
 
Fig.(\ref{fig:Rec_eva_per}) illustrates the evaluation results for four image pairs on SYNTIM dataset. Subfig.(\ref{subfig:rec_error_per}) and (\ref{subfig:rec_distor_per}) give the rectification error and resampling distortion for different methods. In subfig.(\ref{subfig:rec_error_per}), we find that the rectification error is small for all the methods (less than 1 pixel) and the proposed method gives best performance on three examples. Furthermore, as shown in subfig.(\ref{subfig:rec_distor_per}), the proposed method has dramatically reduced the resampling distortion on the four examples compared with the other two approaches. The rectification results of different methods on ``BalMouss'' have been shown in Fig.(\ref{fig:Rec_eva_per_exa}). Compared with other two methods, the proposed method has largely preserved local information of the original image by reducing the resampling distortion during rectification.  
\begin{figure}[h!]
\centering
\begin{subfigure}[t]{0.475\textwidth}
\includegraphics[width=0.475\textwidth]{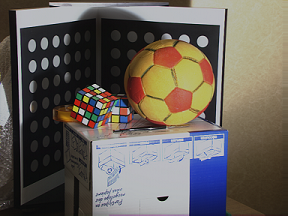}~
\includegraphics[width=0.475\textwidth]{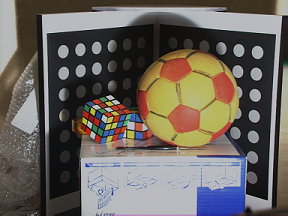} 
 \caption{Original image pair ``BalMouss'' in SYNTIM dataset.}
\label{subfig:BalMouss_per}
\end{subfigure}
\\
\begin{subfigure}[t]{0.475\textwidth}
\includegraphics[width=0.475\textwidth]{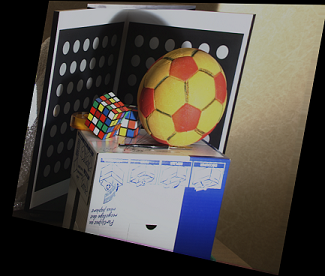} ~
\includegraphics[width=0.475\textwidth]{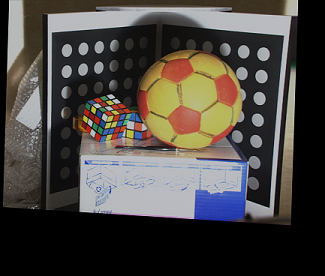} 
\caption{Rectification results with ``Hartley'' \cite{hartley1999theory} method.}
\label{subfig:rec_per_hartly}
\end{subfigure}
\\
\begin{subfigure}[t]{0.475\textwidth}
\includegraphics[width=0.475\textwidth]{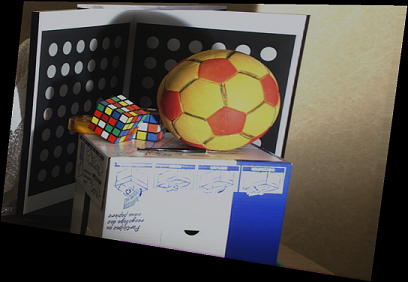} ~
\includegraphics[width=0.475\textwidth]{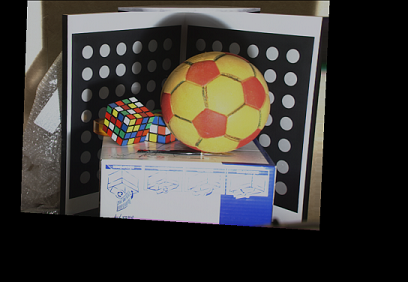} 
\caption{Rectification results with ``Fusiello'' \cite{fusiello2008quasi} method.}
\label{subfig:proposedMethod}
\end{subfigure}

\begin{subfigure}[t]{0.475\textwidth}
\includegraphics[width=0.475\textwidth]{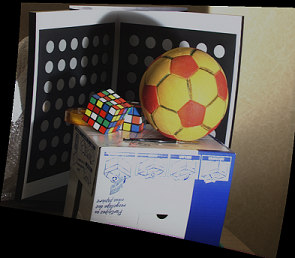} ~
\includegraphics[width=0.475\textwidth]{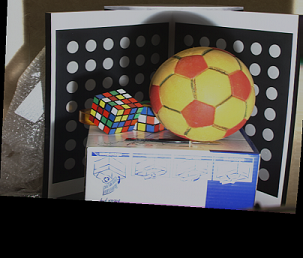} 
\caption{Rectification results with proposed method.}
\label{subfig:proposedMethod}
\end{subfigure}
\caption{An example of rectification result on SYNTIM dataset with different approaches.}
\label{fig:Rec_eva_per_exa}
\end{figure}                     
%---------------------------------------------------------
\subsection{Implementation time \label{Implementation_Time}}
%---------------------------------------------------------
The proposed fisheye stereo rectification approach is realized on a standard desktop (Intel 8 Cores i7) with Matlab R2016b processing environment. For Sony camera with resolution of 1920 $\times$ 1280, the whole rectification process takes about 15 seconds, in which about 6 seconds is spent on relative camera pose estimation process, 3 seconds for rectification model optimization and the final image rectification costs about 6 seconds. For the real application, the computation time can be significantly decreased by C/C++ implementation and GPU acceleration. Furthermore, for the real fisheye stereo configuration, the rectification can easily reach real-time implementation because both the relative camera pose and rectification model can be computed off-line and finally the rectification process can be easily realized by a look-up table strategy.   

%---------------------------------------------------------
\section{Closing Remarks and Future Works\label{sec:ClosingRemarks}}
%---------------------------------------------------------
Before reading this paper, the readers may have wondered the following question: is it not that stereo rectification is already a well-understood and solved old problem, and whether or not it deserves any further investigation? In this paper, we have given an affirmative answer, and we show that the conventional understanding of stereo rectification problem gives only one aspect of the problem. We reveal that, the admissible rectifying transformation can be a richer family than what we knew before. Based on this finding, we have develop an easy approach for fisheye and perspective image rectification and the experiment results show its effectiveness and robustness. Currently, the proposed method can only handle the case when both the epipoles are outside the images. In future, more general motions, e.g., forward motion, will be taken into consideration. Furthermore, we also propose to find a more general way of implementing the transformation matrix for fisheye stereo rectification.
%---------------------

%% Use plainnat to work nicely with natbib. 
\bibliographystyle{plainnat}
\bibliography{Rectification_Reference}
\end{document}